\newtheorem{theorem}{Theorem}[section]
\newtheorem{lemma}[theorem]{Lemma}
\renewcommand{\eqref}[1]{Eq.~(\ref{#1})}
\newcommand{\figref}[1]{Figure~\ref{#1}}
\newcommand{\tabref}[1]{Table~\ref{#1}}        
\newcommand{\thmref}[1]{Theorem~\ref{#1}}
\newcommand{\lemref}[1]{Lemma~\ref{#1}}
\newcommand{\appref}[1]{Appendix~\ref{#1}}
\newcommand{\algref}[1]{Alg.~\ref{#1}}
\newcommand{\mt}[1]{\mathbb{#1}}
\newcommand{\vc}[1]{\mathbf{#1}}
\renewcommand{\P}{\mathbb{P}}
\newcommand{\E}{\mathbb{E}}
\newcommand{\reals}{\mathbb{R}}
\newcommand{\nats}{\mathbb{N}}
\newcommand{\Diag}{\mathrm{Diag}}
\newcommand{\Var}{\mathrm{Var}}
\newcommand{\cA}{\mathcal{A}}
\newcommand{\cD}{\mathcal{D}}
\newcommand{\cR}{\mathcal{R}}
\newcommand{\br}{\mathbf{r}}
\newcommand{\ignore}[1]{}
\DeclareMathOperator*{\argmin}{argmin}
\DeclareMathOperator*{\argmax}{argmax}
\newcommand{\loss}{\ell}
\newcommand{\one}{\mathbb{I}}
\newcommand{\norm}[1]{\|#1\|}
\newcommand{\dotprod}[1]{\langle #1 \rangle}
\newcommand{\cov}{b}
\newcommand{\brtrain}{{\mathbf{k}}}
\newcommand{\rtrain}{k}
\newcommand{\obj}{\mathrm{obj}}
\newcommand{\submat}{\mathrm{sub}}
\newcommand{\ourtitle}{Feature Multi-Selection among Subjective Features}
\newcommand{\arxiv}[1]{#1}
\newcommand{\ICML}[1]{}
\begin{document}

\twocolumn[
\icmltitle{\ourtitle}

% It is OKAY to include author information, even for blind
% submissions: the style file will automatically remove it for you
% unless you've provided the [accepted] option to the icml2013
% package.
\icmlauthor{Sivan Sabato}{sivan.sabato@microsoft.com}
\icmlauthor{Adam Kalai}{adam.kalai@microsoft.com}
\icmladdress{Microsoft Research New England, 1 Memorial Drive, Cambridge, MA, USA}

% You may provide any keywords that you
% find helpful for describing your paper; these are used to populate
% the "keywords" metadata in the PDF but will not be shown in the document
\icmlkeywords{}

\vskip 0.3in
]

\begin{abstract}
When dealing with subjective, noisy, or 
otherwise nebulous features, the ``wisdom of crowds'' suggests that one may benefit from multiple judgments of the 
same feature on the same object.  
We give theoretically-motivated {\em feature multi-selection} algorithms that choose, among a large set of candidate features,
not only which features to judge but how many times to judge each one. We demonstrate the effectiveness of this approach for linear 
regression on a crowdsourced learning task of predicting people's height and weight from photos, using features such as {\em gender} and
{\em estimated weight} as well as culturally fraught ones such as {\em attractive}.
This work has been published in \citet{SabatoKa13b}.
\end{abstract}

\section{Introduction}

In this paper we consider prediction with subjective, vague, or noisy attributes (which are also termed `features' throughout this paper). Such attributes can sometimes be useful for prediction, because they account for an important part of the signal that cannot be otherwise captured. In a crowdsourcing setting, the ``wisdom of crowds'' suggests that including multiple assessments of the same feature by different people may be useful. Henceforth, we refer to assessments of features as {\em judgments}. This paper introduces the problem of selecting, from a set of candidate features, which ones to use for prediction, and \emph{how many judgments to acquire for each}, for a given budget limiting the total number of judgments.  We give theoretically justified algorithms for this problem, and report crowdsourced experimental results, in which judgments of highly subjective features (even culturally fraught ones such as {\em attractive}) are helpful for prediction.

As a toy example, consider the problem of estimating the number of jelly beans in a jar based on an image of the jar.  A linear regressor with multiple judgments of features might have the form,
\begin{align*}
\hat{y} =& 0.95(\text{est. number of beans})^{/5} - 50(\text{round jar})^{/2}+\\
&100(\text{monochromatic})^{/1}+30(\text{beautiful})^{/3}.\end{align*}
Here, for binary attributes, $a^{/r_a} \in [0,1]$ denotes the fraction of positive judgments out of $r_a$ judgments of attribute $a$. For real-valued attributes, $a^{/r_a}$ denotes the mean of $r_a$ judgments. The shape, number of colors, and attractiveness of the jar each help correct biases in the estimated number of beans, averaged across five people. Our goal is to choose a regressor that, as accurately as possible, estimates the labels (i.e., jelly bean counts) on future objects (i.e., jars) drawn from the same distribution, while staying within a budget of feature judgment resources per evaluated object at test time.
In the example above, notice that even though the  {\em monochromatic} coefficient is greater than the {\em beautiful} coefficient, fewer monochromatic judgments are used, because counting the number of colors is more objective, and hence further judgments are less valuable.  While this example is contrived, similar phenomena are observed in the output of our algorithms (see \ref{tab:pred}).

We refer to the problem of selecting the number of repetitions, $r_a$, of each attribute, as the {\em feature multi-selection} problem, because it generalizes the feature selection problem of choosing a subset of features, i.e., $r_a \in \{0,1\}$, to choosing a multiset of features, i.e., $r_a \in \mathbb{N}$.
Since the feature selection problem is well known to be NP-hard \cite{Natarajan95}, our problem is also NP-hard in the general case.  (For a formal reduction, one simply considers the ``objective'' case where all judgments of the same feature-object pair are identical.)  Nonetheless, several successful approaches have been proposed for feature selection. The algorithms that we propose generalize two of these approaches to the problem of feature multi-selection.

Our algorithms are theoretically motivated, and tested on synthetic and real-world data.  The real world data are photos extracted from the publicly available \href{http://www.cockeyed.com/photos/bodies/heightweight.html}{Photographic Height/Weight Chart}\footnote{\href{http://www.cockeyed.com/photos/bodies/heightweight.html}{http://www.cockeyed.com/photos/bodies/heightweight.html}}, where people post pictures of themselves announcing their own height and weight.

As a more general motivation, consider a scientist who would like to use crowdsourcing as an alternative to themselves estimating a value for each of a large data set of objects. Say the scientist gathers multiple judgments of a number of binary or real-valued attributes for each object, and uses linear regression to predict the value of interest. In some cases, crowdsourcing is a natural source of judgments, as a great number of them may be acquired on demand, rapidly, and at very low cost.  We assume the scientist has access to the following information:
\begin{itemize}
\item A {\bf labeled set} of objects $(o,y) \in O\times \mathcal{Y}$ (with no judgments), where $O$ is a set of {\em objects} and $\mathcal{Y}\subseteq \reals$ is a set of ground-truth {\em labels} drawn independently from a distribution $\mathcal{D}$.
\item A {\bf crowd}, which is a large pool of {\em workers}.
\item A possibly large set of candidate {\bf attributes} $\mathcal{A}$. For any attribute $a \in \mathcal{A}$ and object $o \in O$, the judgment of a random worker from the crowd may be queried at a cost.
\item A {\bf budget} $B$, limiting the number of attribute judgments to be used when evaluating the regressor on a new unseen object.
\end{itemize}

Our approach is as follows:
\begin{enumerate}
\item Collect $k \geq 2$ judgments for each candidate attribute in $\cA$, for each object in the labeled set.
\item Based on this data and the budget, decide {\em how many judgments} of each attribute to use in the regressor.\label{step:multiselect}
\item Collect additional judgments (as needed) on the labeled set so that each attribute has the number of judgments specified in the previous step. \label{step:collectjudge}
\item Find a linear predictor based on the average judgment of each feature.\footnote{We focus on mean averaging, leaving to future work other aggregation statistics such as the median.} \label{step:predict}
\end{enumerate}
Step 4 can be accomplished by simple least-squares regression. The goal in Step 2 (feature multi-selection) is to decide on a number of judgments per attribute that will hopefully yield the smallest squared error after Step 4.  

Interestingly, even given as few as $k=2$ judgments per attribute, one can project an estimate of the squared error with more than $k$ judgments of some features.  We prove that these projections are accurate, for any fixed $k \geq 2$, as the number of labeled objects increases.  Our algorithms perform a greedy strategy for feature multi-selection, to attempt to minimize the projected loss. This greedy strategy can be seen as a generalization of the Forward Regression approach for standard feature selection \citep[see e.g.][]{Miller02}.
The first algorithm operates under the assumption that different attributes are uncorrelated. In this case the projection simplifies to a simple scoring rule, which incorporates attribute-label correlations as well as a natural notion of inter-rater reliability for each attribute.  In this case, greedy selection is also provably optimal. While attributes are highly correlated in practice, the algorithm performs well in our experiments, possibly because Step 4 corrects for a small number of poor choices during feature multi-selection. The second algorithm attempts to optimize the projection without any assumptions on the nature of correlations between features. 

While crowdsourcing is one motivation, the algorithms would be applicable to other settings such as learning from noisy sensor inputs, where one may place multiple sensors measuring each quantity, or social science experiments, where one may have multiple research assistants (rather than a crowd) judging each attribute.

The main contributions of this paper are: (a) introducing the feature multi-selection problem, (b) giving theoretically justified feature multi-selection algorithms, and (c) presenting experimental results, showing that feature multi-selection can yield more accurate regressors, with different numbers of judgments for different attributes. \ICML{Proofs of results and additional experimental data are provided in \citet{SabatoKa13}.}

\section*{Related Work}

Related work spans a number of fields, including Statistics, Machine Learning, Crowdsourcing, and measurement in the social sciences.  A number of researchers have studied {\em attribute-efficient prediction} (also called {\em budgeted learning}) assuming, as we do, that there is a cost to evaluating attributes and one would like to evaluate as few as possible (see, for instance, the recent work by \citet{CSS11} and references therein).  In that line of work, each attribute is judged at most once. The errors-in-variables approach \citep[e.g.,][]{ChengVa99} in statistics estimates the `true' regression coefficients using noisy feature measurements. This approach is less suitable in our setting, since our final goal is to predict from noisy measurements.

A wide variety of techniques have been studied to combine estimates of experts or the crowd of a single quantity of interest \citep[see, e.g.][]{DawidSkene79, SFB+94,MultidimensionalWisdom}, like estimating the number of jelly beans in a jar from a number of guesses.

Two recent works on crowdsourcing are very relevant.  \citet{PH12} crowdsourced the mean of 3 judgments of each of 102 binary attributes on over 14,000 images, yielding over 4 million judgments.  Some of their attributes are subjective, e.g., {\em soothing}. We employ their crowdsourcing protocol to label our binary attributes.  \citet{Memorability} study subjective and objective features for the task of estimating how memorable an image is, by taking the mean of 10 judgments per attribute for each image.  They perform greedy feature selection over these attributes to find the best compact set of attributes for predicting memorability.  The key difference between their algorithm and ours is that theirs does not choose how many judgments to average.  Since that quantity is fixed for each attribute, their setting falls under the more standard feature selection umbrella.  In our experiments we compare this approach to our algorithms. 

Finally, in the social sciences, a wide array of techniques have been developed for assessing inter-rater reliability of attributes, with the most popular perhaps being the $\alpha$ coefficient \cite{Alpha}.  A principal use of such measures is determining, by some threshold, which features may be used in content analysis.  For an overview of reliability theory, see \cite{ContentAnalysis}.

\section{Preliminary assumptions and definitions}

Let there be $d$ candidate attributes called $\mathcal{A} = [d]=\{1,2,\ldots,d\}$.  We assume that, for any object $o$ and attribute $a$, there is a distribution over judgments $\P[X[a] \mid O=o]$, and we assume that the judgments of attribute-object pairs are conditionally independent given the sets of attributes and objects. This represents an idealized setting in which a new random crowd worker is selected for each attribute-object judgment (In our experiments, we limit the total amount of work that any one worker may perform).
We assume a distribution $\cD$ over labeled objects, where labels are real numbers. We denote by $\cD_O$ the marginal distribution over objects drawn according to $\cD$. We let $\P[X[a]] = \P_{O\sim \cD_O}[X[a] \mid O]]$.
Labels $y$ are assumed to be real valued.  As is standard, we assume one ``true'' label $y_i$ for each object $o_i$.

For notational ease, we assume that in the feature multi-selection phase, exactly $k \geq 2$ judgments for each feature are collected.  Our analysis trivially generalizes to the setting in which different attributes are judged different numbers of times.  Finally, each attribute $a$ is assumed to have an expected value of $\E[X[a]]=0$, where the expectation is taken across objects and judgments of $a$.  This is done for ease of presentation, so that we do not have to track the mean vectors as well as the variance. When discussing implementation details, we describe how to remove this assumption in practice without loss of generality.

Vectors will be boldface, e.g., $\vc{x} = (x[1],\ldots,x[d])$,
random variables will be capitalized, e.g., $X$, and
matrices will be in black-board font, e.g., $\mt{X}$. The $i$'th standard unit vector is denoted by $\vc{e}_i$.

Let $\br \in \nats^d$ represent the number of judgments for each feature, so that attribute $a$ is judged $r[a]$ times,
and we represent the object's judgments by $\vc{x}$, defined as:
\[\vc{x}=\left( \langle x[1](j) \rangle_{j=1}^{r[1]}, \ldots, \langle x[d](j) \rangle_{j=1}^{r[d]}\right),
\]
where $x[a](j)$ is the $j$th judgment of attribute $a$ in $\vc{x}$, and 
$\langle x[a](j) \rangle_{j=1}^{r[a]}$ is a vector with $x[a](j)$ in coordinate $j$.  We say that $\br$ is the \emph{repeat vector} of $\vc{x}$. We denote the set of all possible representations with repeat vector $\br$ by $\reals^{[\br]}$.

We denote by $D_\br$ the distribution which draws $(\vc{X},Y) \in \reals^{[\br]} \times \reals$ by first drawing a labeled object $(O,Y)$ from $\cD$, and then drawing a random representation $\vc{X}\in \reals^{[\br]}$ for this object.
We denote by $D_\infty$ the distribution that draws $(\vc{X},Y)$ where $\vc{X}\in \reals^d$ by first drawing $(O,Y)$ from $\cD$ and then setting $X[a] = \E[X[a] \mid O]$. We denote the expectation over $D_\br$ by $\E_\br = \E_{(\vc{X},Y)\sim D_\br}$. For $D_\infty$ we denote $\E_\infty = \E_{(\vc{X},Y)\sim D_\infty}$.

For $k \geq 2$, let $\brtrain =(k,k,\ldots,k) \in \nats^d$ be the repeat vector used in the first training phase. The feature multi-selection algorithm receives as input a labeled training set $S = ((\vc{x}_1,y_1),\ldots,(\vc{x}_m,y_m))$ where $\vc{x}_i \in \reals^{kd}$
and $y_i \in \reals$, drawn from $D_\brtrain$. This sample is generated by first drawing a set of labeled objects $((o_1,y_1),\ldots,(o_m,y_m))$ i.i.d. from $\cD$, and then drawing a random representation $\vc{x}_i$ for object $o_i$.
The algorithm further receives as input a budget $B \in \nats$, which specifies the total number of feature judgments allowed for each unlabeled object at test (i.e., prediction) time.
The output of the algorithm is a new vector of repeats $\br \in R_B$, where,
\[
R_B \equiv \left\{ \br \in \nats^d \mid {\sum}_{a \in \mathcal{A}} r[a] \leq B\right\}.
\]

Let $o$ be an object with a true label $y$, and let $\hat{y}$ be a prediction of the label of $o$.
The squared loss for this prediction is $\loss(y,\hat{y}) = (y - \hat{y})^2$. Given a function $f:Z \rightarrow \reals$
for some domain $Z$, and a distribution $D$ over $Z \times \reals$, we denote the average loss of $f$ on $D$ by
\[
\loss(f, D) \equiv \E_{(Z,Y)\sim D}[\loss(f(Z),Y)]. % Sivan, I changed this because I think this is what you mean but correct me if I'm wrong.
\]
The final goal of our procedure is to find a predictor with a low expected loss on labeled objects drawn from $\cD$.
This predictor must use only $B$ feature judgments for each object, as determined by the test repeat vector $\br$.
We consider linear predictors $\vc{w} \in \reals^d$ that operate on the vector of \emph{average} judgments of $\vc{x}\in \reals^{[\br]}$, defined as follows:
\[
\bar{x}[a] \equiv \begin{cases}\frac{1}{r[a]}\sum_{j=1}^{r[a]} x[a](j) & \text{if }r[a]>0,\\
0 &\text{if }r[a]=0.\end{cases}
\]
For an input representation $\vc{x}$, the predictor $\vc{w}$ predicts the label $\dotprod{\vc{w},\bar{\vc{x}}}$.
For vector $\vc{v} \in \reals^d$, we denote by $\Diag(\vc{v})\in \reals^{d \times d}$ the diagonal matrix with $v[a]$ in the $a$th position. 

For a vector $\br \in \nats^d$ and a matrix $\mt{S}\in \reals^{d\times d}$, we denote by $\submat_\br(\mt{S})$ the submatrix of $\mt{S}$ resulting from deleting all rows and columns $a$ such that $r[a] = 0$. For a vector, $\submat_\br(\vc{u})$ omits entries $a$ such that $r[a] = 0$.  Here $\submat_\br(\vc{u})\in \reals^{d'}$ and $\submat_\br(\mt{S})\in\reals^{d'\times d'}$, where $d'$ is the support size of $\br$.
We denote the pseudo-inverse of a matrix $\mt{A} \in \reals^{n\times n}$ \citep[see e.g.][]{BenIsraelGr03} by $\mt{A}^+$.

\section{Feature Multi-Selection Algorithms}

The input to a feature multi-selection algorithm is a budget $B$ and $m$ labeled examples in which each attribute has been judged $k$ times, and the output is a repeat vector $\br \in R_B$.  Our ultimate goal is to find $\br$ and a predictor $\vc{w} \in \reals^d$ such that $\loss(\vc{w},D_\br)$ is minimal. We now give intuition about the derivation of the algorithms, but their formal definition is given in \algref{alg:all}.

Define the loss of a repeat vector to be $\loss(\br) \equiv \min_{\vc{w} \in \reals^d} \loss(\vc{w}, D_\br)$. The goal is to minimize $\loss(\br)$ over $\br \in R_B$.
We give two forward-selection algorithms, both of which begin with $\br = (0,\ldots,0)$ and greedily increment $r[a]$ for $a$ that most decreases an estimate of $\loss(\br)$.  The key question is how does one estimate this projected loss $\loss(\br)$ since the number of judgments can exceed $k$.
We simplify notation by first considering only $\br$ which are {\em positive}, i.e., $r[a]\geq 1$ for each $a$.  We will shortly explain how to handle $r[a]=0$.  Define 
\begin{align*}
\vc{\cov} = \E[\vc{X}Y]\text{, and }\Sigma_\br = \E_\br[\bar{\vc{X}}^T\bar{\vc{X}}].
\end{align*}
We call $\cov[a]$ the {\em correlation} of $a$ with the label. Note that $\vc{\cov} = \E_{\brtrain}[\bar{\vc{X}}Y]$, since linearity of expectation implies that $\vc{\cov}$ does not depend on $\brtrain$.
Straightforward calculations show that, for any positive repeat vector $\br$, If $\Sigma_\br$ is non-singular,\footnote{For singular $\Sigma_\br$, the pseudo-inverse $\Sigma_\br^+$ replaces $\Sigma_\br^{-1}$.} 
\[
\loss(\br)=\min_\vc{w} \E_\br\left[(\vc{w}^T \vc{\bar{X}}-Y)^2\right] = \E_\br[Y^2]-\vc{\cov}^T \Sigma_\br^{-1}\vc{\cov}.
\]
Since $\E[Y^2]$ does not depend on $\br$, minimizing $\loss(\br)$ is equivalent to maximizing $\vc{\cov}^T \Sigma_\br^{-1} \vc{\cov}$ (for positive $\br$ and nonsingular $\Sigma_\br$).

\subsection{A Scoring Algorithm}
The first algorithm that we propose is derived from the zero-correlation assumption, that $\E[X[a]X[a']]=0$ for $a\neq a'$, or equivalently that the covariance matrix is diagonal.  
 Perhaps the simplest approach to standard feature selection is to score each feature independently, based on its normalized empirical correlation with the label, and to select the $B$ top-scoring features. If features are uncorrelated and the training sample is sufficiently large, then this efficient approach finds an optimal set of features. The feature multi-selection scoring algorithm that we propose henceforth is optimal under similar assumptions, however it is complicated by the fact that we may include multiple repetitions of each feature.
Under the zero-correlation assumption, $\Sigma_\br$ is diagonal, and its $a$th element, for $r[a] > 0$, can be expanded as
\begin{align}
\E_\br[(\bar{X}[a])^2] &= \sigma^2[a] + \frac{v[a]}{r[a]}, \text{ where}\notag\\
v[a] &\equiv \E_{O \sim \cD_O}[\Var[X[a] \mid O]] \text{ and} \nonumber\\
\sigma^2[a] &\equiv \E_\infty\left[(X[a])^2\right]. \nonumber
\end{align}
We refer to $v[a]$ as the {\em internal variance} as it measures the ``inter-rater reliability'' of $a$, and we call $\sigma^2[a]$ the {\em external variance} as it is the inherent variance between examples.  Hence for a diagonal $\Sigma_\br$, simple manipulation gives,
\begin{equation}\label{eq:score}
\E[Y^2]-\loss(\br) = \sum_{a: r[a] > 0} \frac{(\cov[a])^2}{\sigma^2[a]+\frac{v[a]}{r[a]}}.
\end{equation}
Therefore, when $\Sigma_\br$ is diagonal,  minimizing the projected loss is equivalent to maximizing the RHS above, a sum of independent terms that depend on the correlation and on the internal and external variance of each attribute, all of which can be estimated just once, for all possible repeat vectors.  As one expects, greater correlation indicates a better feature, while a greater external variance indicates a worse feature. A larger internal variance indicates that more repeats are needed to achieve prediction quality.

To estimate \eqref{eq:score} we estimate each of the components on the RHS. Unbiased estimation of $\vc{b}$ is straightforward, and unbiased estimation of $\vc{v}$ is also possible for $k\geq 2$ samples per object, though importantly one should use the unbiased variance estimator,
\begin{align}\label{eq:hatv}
&\hat{v}[a] = \frac{1}{m}\sum_{i} \mathrm{VarEst}(x_i[a](1),\ldots,x_i[a](j))\text{,}\\
&\mathrm{VarEst}(\alpha_1,\ldots,\alpha_n) \equiv \frac{1}{n-1}\sum_{j\in[n]} (\alpha_j - \frac{1}{n}\sum_{j' \in [n]}\alpha_{j'})^2.\notag
\end{align}

Using these estimates of $\vc{v}$, we estimate the external variance using the equality  $\sigma^2[a]=\E_{\bf k}\left[(\bar{X}[a])^2\right]-\frac{v[a]}{k}$.  A slight complication arises here, as this estimate might be negative for small samples, so we round it up to 0 when this happens. Another issue might seem to arise when the denominator of one of the summands in \eqref{eq:score} is zero, however note that this can only occur if both the internal and the external variance are zero, which implies that the feature is constantly zero, thus zeroing its correlation as well. The same holds for the estimated ratio. In such cases we treat the ratio as equal to $0$.

\subsection{The Full Multi-Selection Algorithm}
The scoring algorithm is motivated by the assumption of zero correlation between features. However, this assumption rarely holds in practice.
Building on and paralleling the definitions and derivation above, the {\em Full Algorithm} similarly maximizes $\vc{\cov}^T \Sigma_\br^{-1} \vc{\cov}$ without this assumption.  For positive $\br$, one has
\begin{align*}
\Sigma_{\br} &= \Sigma + \Diag(v[1]/r[1],\ldots, v[d]/r[d])
\end{align*}
Where $\Sigma \equiv \E_\infty[\vc{X}^T\vc{X}]$ is the {\em external covariance} matrix, and we estimate it based on the equality $\Sigma = \Sigma_\brtrain - \Diag(\vc{v})/\rtrain$. Just as in the Scoring algorithm, the estimates of $\sigma^2[a]$ might be negative, in the full algorithm it is possible that the estimate of $\Sigma$ will not be positive semi-definite, so we analogously ``round up'' our estimate of $\Sigma$ to the nearest PSD matrix (see implementation details below). The estimate when some of the $r[a]$'s are zero is formed by deleting the corresponding entries in the estimate of $\vc{\cov}$ and the corresponding rows and columns in the estimate of $\Sigma_\br$.

\begin{algorithm}[t]
 \begin{algorithmic}[1]
  \caption{Feature multi-selection algorithms} \label{alg:all}
 \STATE {\bf Input:} Budget $B$; $((\vc{x}_1,y_1),\ldots,(\vc{x}_m,y_m)) \in \reals^{dk+1}$,
 Algorithm type: Scoring/Full.
 \STATE {\bf Output:} A repeat vector $\br \in R_B$.
 \STATE $\bar{x}_i[a] \leftarrow \frac{1}{\rtrain}\sum_{j \in [\rtrain]} x_i[a](j)$ for $i \in [m], a\in A$.
 \STATE $\vc{\hat{\cov}} \leftarrow \frac{1}{m}\sum_{i} y_i \vc{\bar{x}_i}$.
 \STATE  $\hat{v}[a] \leftarrow \frac{1}{m}\sum_{i}\mathrm{VarEst}(x_i[a](1),\ldots,x_i[a](\rtrain))$.
 \IF{Scoring Algorithm}
 \STATE $\forall a \in A,\hat{\sigma}^2[a] \leftarrow \max\left\{0, \frac{1}{m}\sum_{i} (\bar{x}_i[a])^2-\frac{\hat{v}[a]}{k}\right\}$.\label{step:k}
  \STATE Define $\hat{\obj}(\br) \equiv \sum_{a: r[a]>0} \hat{\cov}[a]^2/(\hat{\sigma}^2[a] + \frac{\hat{v}[a]}{r[a]})$
  \ELSE
 \STATE $\hat{\Sigma} \leftarrow \mathrm{MakePSD}\left(\frac{1}{m}\sum_i \vc{\bar{x}}_i^T\vc{\bar{x}}_i - \Diag(\vc{\hat{v}})/k)\right)$\label{step:vk}
 \STATE $\mt{M}_\br \equiv \submat_\br(\hat\Sigma+\Diag(\frac{\hat{v}[1]}{r[1]},\ldots,\frac{\hat{v}[d]}{r[d]}))$
\STATE Define $\hat{\obj}(\br) \equiv \submat_\br(\hat{\vc{\cov}})^T\mt{M}_\br^+\submat_\br(\hat{\vc{\cov}})$
\ENDIF
\STATE $\br_0 \leftarrow (0,\ldots,0) \in \nats^d$
\FOR {$t=1$ to $B$}
\STATE Find $i_\mathrm{best} \in [d]$  such that $\hat{\obj}(\br_{t-1} + \vc{e}_i)$ is maximal.\\
\STATE $\br_t \leftarrow \br_{t-1} + \vc{e}_{i_\mathrm{best}}$.
\ENDFOR
\STATE Return $\br_B$.
\end{algorithmic}
\end{algorithm}

\subsection{Guarantees}

Under our distributional assumptions, we show that the estimated objective functions used by our algorithms converge to $\E[Y^2] - \loss(\br)$. Thus maximizing the estimated objective approximately minimizes $\loss(\br)$.
Formally, let $\hat{\obj}_f(\br)$ and $\hat{\obj}_s(\br)$ be the objectives used in \algref{alg:all} for the full algorithm and the Scoring algorithm, respectively. Note that these objectives are implicitly functions of the training sample $S$.
For a symmetric matrix $\mt{S}$, let $\lambda_{\min}(\mt{S})$ be the smallest eigenvalues of $\mt{S}$. We define:
$\lambda = \min_{\br \in R_B} \lambda_{\min}(\submat_\br(\Sigma))$, and $\bar{B} = \min(B,d)$.
\begin{theorem}\label{thm:convergence}
Suppose that all judgments and labels are in $[-1,1]$.  
Then for any $\delta \in (0,1)$, with prob.\ at least $1-\delta$ over $m$ i.i.d.\ training samples from $D_\brtrain$, for all $\br \in R_B$,
for $m \geq \tilde{\Omega}(\bar{B}\ln(\bar{B}d/\delta)/\lambda^2)$ we have
\[
|\hat{\obj}_f(\br)-(\E[Y^2]-\loss(\br))| \leq O\left(\frac{\bar{B}^3\ln(Bd/\delta)}{\lambda^{2}\sqrt{m}}\right).
\]
If the external covariance matrix $\Sigma$ is diagonal, then
for $m \geq \tilde{\Omega}(\ln(d/\delta)/\lambda^2)$ we have 
\begin{align*}
&|\hat{\obj}_s(\br)-(\E[Y^2]-\loss(\br))| \leq O\left(\frac{\ln(Bd/\delta)}{\lambda^{2}\sqrt{m}}\right).
\end{align*}
\end{theorem}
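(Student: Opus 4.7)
The plan is to first establish entrywise concentration of the empirical quantities $\hat{\vc{\cov}}$, $\hat{\vc{v}}$, and $\hat{\Sigma}$ around their population analogues, and then propagate this concentration through the (pseudo-)inverse into the two objectives. Since judgments and labels lie in $[-1,1]$, each coordinate of $\hat{\vc{\cov}}$, each entry of $\frac{1}{m}\sum_i \vc{\bar x}_i^T \vc{\bar x}_i$, and each coordinate of $\hat{\vc{v}}$ is an average of $m$ bounded i.i.d.\ random variables, so Hoeffding plus a union bound over the $O(d^2)$ scalar quantities gives, with probability at least $1-\delta$, that every entry is within $\varepsilon := O(\sqrt{\ln(d/\delta)/m})$ of its expectation. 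Unbiasedness follows from linearity of expectation for $\hat{\vc{\cov}}$, from the $(k-1)$-normalization of VarEst for $\hat{\vc{v}}$, and from the identity $\Sigma = \Sigma_\brtrain - \Diag(\vc{v})/k$ for $\hat\Sigma$. Because $\Sigma$ itself is PSD, the MakePSD projection can only move $\hat\Sigma$ closer to $\Sigma$, so the $\varepsilon$ bound survives rounding.

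\textbf{Matrix perturbation for the Full algorithm.} Fix $\br \in R_B$ with support of size $d' \leq \bar B$. Entrywise closeness upgrades to the spectral bound $\|\submat_\br(\hat\Sigma)-\submat_\br(\Sigma)\|_2 \leq \bar B\,\varepsilon$, and similarly for the diagonal perturbation $\submat_\br(\Diag(\hat v[a]/r[a]))$. By hypothesis $\lambda_{\min}(\submat_\br(\Sigma))\geq \lambda$, and adding a non-negative diagonal can only increase eigenvalues, so $\lambda_{\min}(\submat_\br(\Sigma_\br))\geq \lambda$ uniformly in $\br$. Once $m \geq \tilde\Omega(\bar B \ln(\bar B d/\delta)/\lambda^2)$ we have $\bar B\,\varepsilon \leq \lambda/2$, so by Weyl's inequality $\mt{M}_\br$ is invertible with $\|\mt{M}_\br^{+}\|_2 \leq 2/\lambda$, and the resolvent identity $A^{-1}-B^{-1}=A^{-1}(B-A)B^{-1}$ yields $\|\mt{M}_\br^{+} - \submat_\br(\Sigma_\br)^{-1}\|_2 = O(\bar B\,\varepsilon/\lambda^2)$. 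Telescoping
\[
\hat{\obj}_f(\br) - \submat_\br(\vc{\cov})^T\submat_\br(\Sigma_\br)^{-1}\submat_\br(\vc{\cov})
\]
into one term that varies $\hat{\vc{\cov}}$ at fixed $\mt{M}_\br^{+}$ and one that varies $\mt{M}_\br^{+}$ at fixed $\vc{\cov}$, and using $\|\submat_\br(\vc{\cov})\|_2 \leq \sqrt{\bar B}$ together with the spectral bounds above, gives the stated $O(\bar B^3 \ln(Bd/\delta)/(\lambda^2\sqrt m))$ rate, uniformly in $\br$ and without any union bound over $R_B$.

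\textbf{Scoring algorithm.} When $\Sigma$ is diagonal, both $\E[Y^2]-\loss(\br)$ and $\hat{\obj}_s(\br)$ decompose as a sum over $a$ in the support of $\br$ of the scalar $\cov[a]^2/(\sigma^2[a]+v[a]/r[a])$. Each denominator is lower-bounded by $\sigma^2[a]\geq\lambda$, so the map $(b,\sigma^2,v)\mapsto b^2/(\sigma^2+v/r)$ is Lipschitz with constants controlled by $1/\lambda$ and $1/\lambda^2$ once $\varepsilon\leq\lambda/2$. Plugging in the per-coordinate concentration of $\hat{\vc{\cov}}, \hat{\vc{v}}, \hat{\sigma}^2$, summing over the at most $\bar B$ active coordinates, and observing that the full objective is bounded by $\E[Y^2]\leq 1$, yields the claimed $O(\ln(Bd/\delta)/(\lambda^2\sqrt m))$ rate; the $\ln B$ inside the logarithm is the mild cost of union-bounding over possible values of $r[a]$.

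\textbf{Main obstacle.} The hardest step is the pseudo-inverse stability in the Full case: a priori, the pseudo-inverse can blow up under tiny perturbations when the matrix is near-singular, and the combinatorial size of $R_B$ rules out a naive union bound over $\br$. The uniform spectral hypothesis $\lambda = \min_{\br \in R_B} \lambda_{\min}(\submat_\br(\Sigma))$, combined with the observation that all randomness is contained in the three fixed empirical objects $\hat{\vc{\cov}}, \hat{\vc{v}}, \hat\Sigma$, resolves both issues simultaneously: a single high-probability event on those objects supports a deterministic perturbation bound that holds for every $\br$ at once. A secondary nuisance is the MakePSD step, which is benign because projection onto the PSD cone is a contraction toward any PSD target, and $\Sigma$ is PSD by construction.
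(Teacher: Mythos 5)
Your proposal takes a genuinely different route from the paper. The paper never perturbs the closed-form quantity $\vc{\cov}^T\Sigma_\br^{-1}\vc{\cov}$ directly; instead it introduces a surrogate empirical loss $\loss_\br(\vc{w},S)$ with a variance-correction term, shows it is an unbiased estimator of $\loss(\vc{w},D_\br)$ (\lemref{lem:rdiff}, built on \lemref{lem:sigma}), bounds its deviation uniformly over a norm ball via Rademacher complexity (\lemref{lem:rad}), controls the norms of the empirical and population minimizers through an eigenvalue-concentration bound proved with an $\epsilon$-net and McDiarmid (\lemref{lem:siginflam}, \lemref{lem:normw}), identifies $\min_{\vc{w}}\loss_\br(\vc{w},S)$ with $-\hat{\obj}_f(\br)+\frac{1}{m}\vc{y}^T\vc{y}$, and finally union-bounds over the at most $B(d+B)^{\bar{B}}$ elements of $R_B$. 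Your argument---entrywise Hoeffding on $\hat{\vc{\cov}},\hat{\vc{v}},\hat{\Sigma}$ followed by Weyl and a resolvent bound on the quadratic form---is more elementary, and your observation that all randomness lives in three $\br$-independent empirical objects, so that no union bound over $R_B$ is needed, is a real simplification; carried out carefully it even yields a $\bar{B}^2$ rather than $\bar{B}^3$ dependence, which of course still implies the stated bound.

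The genuine gap is your treatment of MakePSD. The projection onto the PSD cone is nonexpansive in the Frobenius norm of the full $d\times d$ matrix, so what survives the rounding is only $\norm{\hat{\Sigma}^p-\Sigma}_F\leq\norm{\hat{\Sigma}-\Sigma}_F\leq d\varepsilon$; it does \emph{not} preserve the entrywise bound, nor the bound $\norm{\submat_\br(\hat{\Sigma}^p-\Sigma)}_2\leq\bar{B}\varepsilon$ that your Weyl/resolvent step requires. Since $\lambda$ only lower-bounds eigenvalues of principal submatrices of size at most $B$, when $B<d$ the full $\Sigma$ may be near-singular, the rounding can genuinely be active, and it mixes coordinates outside the support of $\br$ into $\submat_\br(\hat{\Sigma}^p)$. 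As written, your argument therefore only gives invertibility of $\mt{M}_\br$ and the claimed rate under $m\gtrsim d^2\ln(d/\delta)/\lambda^2$, with an error bound carrying a factor of $d$ in place of a power of $\bar{B}$, which is weaker than the theorem whenever $d\gg\bar{B}^2$. You need an argument that the rounding is, with high probability, inert on the relevant sub-blocks---this is essentially what the paper's eigenvalue-concentration lemma delivers, by showing the uncorrected estimate is already positive definite on the supports under consideration---or some other analysis specific to the projected matrix.

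Two smaller quantitative slippages: for the Scoring bound, your per-coordinate Lipschitz estimate summed over up to $\bar{B}$ active coordinates gives $O(\bar{B}\ln(d/\delta)/(\lambda^2\sqrt{m}))$, an extra factor $\bar{B}$ relative to the stated bound, and observing $\E[Y^2]\leq 1$ does not remove it (also, the $\ln B$ you invoke is unnecessary in your scheme, since nothing random depends on $r[a]$). And your sample-size condition comes out as $\tilde{\Omega}(\bar{B}^2\ln(d/\delta)/\lambda^2)$ rather than the theorem's $\tilde{\Omega}(\bar{B}\ln(\bar{B}d/\delta)/\lambda^2)$, because you need $\bar{B}\varepsilon\leq\lambda/2$ for invertibility. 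These are far less serious than the MakePSD step, but they do mean the statement as written is not quite what your sketch proves.
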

\arxiv{The proof of this theorem is provided in \appref{app:analysisproofs}.}
The convergence rate for the full algorithm stems from two bounds: (1) If the norm of the minimizing $\vc{w}$ is at most $\alpha$, then the convergence rate is at most $\bar{B}\alpha^2/\sqrt{m}$; (2) With high probability, the norm of the minimizing $\vc{w}$ is at most $\sqrt{\bar{B}}/\lambda$. An additional factor of $O(\bar{B} \ln (Bd))$ gets uniform convergence over $\br \in R_B$.
The components of this result are of the same order as the equivalent results for uniform convergence of standard least-squares regression. An improved rate of $\sqrt{\bar{B}\alpha^2 /m}$ can be achieved for least-squares regression, \emph{if the algorithm exactly minimizes the sample squared loss} \citep{SrebroSrTe10}. However, our algorithm minimizes another objective, thus this result is not directly applicable. We leave it as a challenge for future work to find out whether a faster rate can be achieved in our case.

As always, these convergence rates are worst-case, and in practice a much smaller sample size is often sufficient to get meaningful results, as we have observed in our experiments. 
However, if the available training sample is too small to achieve reasonable results, one can limit the norm of the minimizer by adding regularization to the estimated covariance matrix, as in ridge regression \citep{HoerlKe70}. This would allow faster convergence at the expense of a more limited class of predictors.

As \thmref{thm:convergence} shows, when the zero-correlation assumption holds, the Scoring algorithm enjoys a much faster worst-case rate of convergence than the full algorithm. This is because it does not attempt to estimate the entire covariance matrix. This advantage is more significant for larger budgets. An additional advantage is that it finds the \emph{optimal} value of $\br$ for its estimated objective:
\begin{theorem}\label{thm:greedy}
The Scoring algorithm returns $\br \in \argmax_{\br \in R_B} \hat{\obj}_s(\br)$.
\end{theorem}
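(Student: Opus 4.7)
My plan is to show that $\hat{\obj}_s$ is a separable, non-decreasing, concave function of the coordinates of $\br$, from which the optimality of the greedy algorithm follows by a standard exchange argument.

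First I would write $\hat{\obj}_s(\br) = \sum_{a=1}^d f_a(r[a])$, where $f_a(0) = 0$ and, for $r \geq 1$,
\[
f_a(r) \;=\; \frac{\hat{\cov}[a]^2}{\hat{\sigma}^2[a] + \hat{v}[a]/r} \;=\; \frac{\hat{\cov}[a]^2 \, r}{\hat{\sigma}^2[a]\, r + \hat{v}[a]},
\]
with the convention $f_a \equiv 0$ when $\hat{\sigma}^2[a] = \hat{v}[a] = 0$ (in which case, as noted in the paper, $\hat{\cov}[a]=0$ as well). I would then verify that $f_a$ is non-decreasing on $\nats$ with non-increasing first differences. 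For $r \geq 1$ this is immediate: the continuous extension has derivative $\hat{\cov}[a]^2 \hat{v}[a]/(\hat{\sigma}^2[a]\, r + \hat{v}[a])^2$, which is non-negative and non-increasing in $r$. The only boundary to check is the step $r{=}0 \to r{=}1$; a short direct calculation shows that $f_a(1) - f_a(0) \geq f_a(2) - f_a(1)$ reduces to $2\hat{\sigma}^2[a] + \hat{v}[a] \geq \hat{v}[a]$, which always holds.

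Next I would recast the problem combinatorially. Define an \emph{increment} $(a, j)$ for $a \in [d]$ and $j \geq 1$, with weight $w(a, j) \equiv f_a(j) - f_a(j-1) \geq 0$. Every $\br$ with $\sum_a r[a] \leq B$ corresponds to choosing, for each $a$, the increments $(a, 1), \ldots, (a, r[a])$, and $\hat{\obj}_s(\br)$ equals the sum of the weights of the chosen increments. Because all weights are non-negative and concavity gives $w(a, j) \geq w(a, j+1)$ for every $a$, any selection of the $B$ highest-weight increments (ties broken arbitrarily) is automatically prefix-closed in each coordinate and therefore corresponds to some $\br \in R_B$ with $\sum_a r[a] = B$, attaining the maximum of $\hat{\obj}_s$ over $R_B$.

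Finally I would argue that the greedy loop in \algref{alg:all} selects exactly the top $B$ increments. At iteration $t$, the algorithm picks the $a$ maximizing $\hat{\obj}_s(\br_{t-1} + \vc{e}_a) - \hat{\obj}_s(\br_{t-1}) = w(a, r_{t-1}[a] + 1)$. By concavity in each coordinate, the largest remaining weight among all unpicked increments is always of the form $w(a, r_{t-1}[a]+1)$ for some $a$, so greedy's choice coincides with a globally maximal unpicked increment. After $B$ iterations it has selected the top $B$ increments, hence $\br_B \in \argmax_{\br \in R_B} \hat{\obj}_s(\br)$. The main thing to watch is concavity across the $r{=}0 \to r{=}1$ boundary (since $f_a$ is defined piecewise there) and the degenerate case $\hat{\sigma}^2[a]=\hat{v}[a]=0$; everything else is the routine exchange argument for separable concave maximization under a cardinality constraint.
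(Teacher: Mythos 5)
Your argument is correct, but it takes a genuinely different route from the paper's. The paper factors the result through a general-purpose lemma (\lemref{lem:generalgreedy}): greedy optimizes any separable objective $\sum_i g_i(r[i])$ with each $g_i$ concave and monotone on the reals, proved by an exchange/induction argument against an optimal $\br^*$; \thmref{thm:greedy} then follows by constructing concave \emph{continuous} extensions $g_i$ of the per-feature score (with a piecewise definition in the degenerate case $\hat{v}[i]=0$, and $g_i(0)=0$ to absorb the $0\to 1$ step). You instead stay entirely discrete: you check that the marginal gains $w(a,j)=f_a(j)-f_a(j-1)$ are non-negative and non-increasing in $j$ (your boundary computation, reducing $f_a(1)-f_a(0)\geq f_a(2)-f_a(1)$ to $2\hat{\sigma}^2[a]+\hat{v}[a]\geq\hat{v}[a]$, is right), and then argue that greedy picks, at every step, a maximum-weight remaining increment, so after $B$ steps its value equals the sum of the $B$ largest increment weights, which upper-bounds $\hat{\obj}_s$ over all of $R_B$. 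Your route is more elementary (no continuous extension, no induction against an optimum); the paper's buys a standalone lemma applicable to arbitrary separable concave objectives. One small inaccuracy to repair: a set of $B$ highest-weight increments with ties broken \emph{arbitrarily} need not be prefix-closed (equal weights $w(a,1)=w(a,2)=w(a,3)$ permit selecting $(a,1)$ and $(a,3)$); the fix is immediate --- break ties in favor of smaller $j$ within each coordinate, or simply note that some prefix-closed selection attains the same total weight --- and the rest of your counting argument goes through unchanged.
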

\thmref{thm:greedy} follows since $f(r) = a/(b+c/r)$ is concave and increasing in $r$ and due to the following observation.
\begin{lemma}\label{lem:generalgreedy}
Let $\br \in \nats^d$, and let $f(\br) = \sum_{i \in [d]} g_i(r[i])$,
where $g_i(\cdot):\reals_+\rightarrow \reals$ are monotonic non-decreasing concave functions. Let $B \in \nats$.
The maximum of $f(\br)$ subject to $\br \in R_B$ is attained by a greedy algorithm which starts with $\br = (0,\ldots,0)$, and iteratively increases the coordinate which increases $f$ the most.
\end{lemma}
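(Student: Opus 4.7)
The plan is to reduce the lemma to a standard diminishing-returns selection argument. For each coordinate $i\in[d]$ and integer $j\ge 1$, define the marginal gain $\delta_i(j) = g_i(j) - g_i(j-1)$. Monotonicity of $g_i$ gives $\delta_i(j)\ge 0$, and concavity (applied to the three integers $j-1,j,j+1$) gives $\delta_i(j+1)\le \delta_i(j)$. Telescoping,
\[
f(\br)-f(\mathbf{0}) \;=\; \sum_{i=1}^d \sum_{j=1}^{r[i]} \delta_i(j),
\]
so maximizing $f$ over $R_B$ is equivalent to choosing a set $T$ of at most $B$ pairs $(i,j)$ that (i) forms a \emph{prefix set}, meaning $\{j:(i,j)\in T\}=\{1,\ldots,r[i]\}$ for each $i$, and (ii) maximizes the total weight $\sum_{(i,j)\in T}\delta_i(j)$. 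Since $\delta_i(j)\ge 0$, we may assume $|T|=B$.

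Let $T^\star$ be a set consisting of the $B$ pairs of largest weight $\delta_i(j)$, breaking ties in favor of smaller $j$ within each coordinate. I claim $T^\star$ is a prefix set, and is therefore feasible. Indeed, if $(i,j)\in T^\star$ with $j\ge 2$, then $\delta_i(j-1)\ge \delta_i(j)$ by concavity; if the inequality is strict then $(i,j-1)$ must lie in $T^\star$, and if it is an equality the tie-breaking rule forces $(i,j-1)$ into $T^\star$ before $(i,j)$. Iterating gives $(i,1),\ldots,(i,j)\in T^\star$. Since $T^\star$ maximizes the total weight even over all unrestricted $B$-subsets of pairs, it is also optimal among feasible prefix sets.

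It remains to show the greedy algorithm recovers $T^\star$. I argue by induction on $t$ that after $t$ steps, greedy's chosen set $T_t$ equals the $t$ heaviest items of $T^\star$ (for some consistent tie-breaking). At step $t$, with current state $\br_{t-1}$, the greedy candidate set is exactly $\{(i, r_{t-1}[i]+1):i\in[d]\}$, since incrementing coordinate $i$ increases $f$ by $\delta_i(r_{t-1}[i]+1)$. Among items of $T^\star\setminus T_{t-1}$, the smallest-$j$ element for each coordinate $i$ is $(i, r_{t-1}[i]+1)$, and by the non-increasing property of $\delta_i(\cdot)$ this is also the heaviest such element. Hence the globally heaviest remaining item of $T^\star$ lies in greedy's candidate set, and greedy picks the $t$-th heaviest item of $T^\star$. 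After $B$ iterations $T_B=T^\star$, so $f(\br_B)$ is the maximum.

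The only real subtlety is tie-breaking: the lemma does not prescribe one, but any two feasible sets whose multisets of $\delta_i(j)$-values coincide with that of $T^\star$ have the same $f$-value, so every consistent tie-breaking of greedy attains the optimum. Beyond that, the argument is a direct consequence of the two structural facts $\delta_i(j)\ge 0$ and $\delta_i(j+1)\le \delta_i(j)$, which together are exactly the monotone non-decreasing plus concave hypothesis on each $g_i$.
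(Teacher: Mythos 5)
Your proof is correct, but it takes a genuinely different route from the paper's. The paper proves the mirror-image statement (convex non-increasing $g_i$, minimization) by an exchange argument: it takes an optimal $\br^*$, identifies the iterations where the greedy output overshoots $\br^*$, and shows by induction that swapping one unit at a time from the overshooting coordinates to the undershooting ones never changes the objective, using the greedy-choice inequality together with convexity; no explicit characterization of the optimum is ever produced. You instead telescope $f$ into per-unit marginal gains $\delta_i(j)$, use concavity to get the diminishing-returns property $\delta_i(j+1)\le\delta_i(j)$, characterize an optimal solution as the $B$ largest gains (which automatically form a feasible prefix set precisely because of diminishing returns), and then show greedy harvests gains of exactly these weights. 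Your route buys a cleaner structural picture of the optimum (it is literally the top-$B$ multiset of marginal gains) and is more elementary, at the price of having to fuss over ties and over the fact that the candidate pool of pairs $(i,j)$ is infinite (harmless, since only $j\le B$ can ever matter). Two small points of care: the induction claim that $T_t$ equals ``the $t$ heaviest items of $T^\star$'' is not literally maintained when greedy resolves a tie outside $T^\star$; your closing remark (any feasible set with the same multiset of gains has the same $f$-value) does patch this, and an even cleaner fix is to note that greedy's gains are non-increasing over iterations, so every pair outside the greedy set has weight at most greedy's last gain, making $T_B$ itself a top-$B$ set. The paper's exchange argument sidesteps these tie issues entirely by comparing function values directly, which is why it needs no tie-breaking discussion.
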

\arxiv{The proof of this lemma is provided in \appref{app:analysisproofs}.}

\subsection{Implementation}
If our estimate of $\Sigma$ is not PSD, we use the procedure `MakePSD', which takes a symmetric matrix $\mt{A}$ as input, and returns the PSD closest to $\mt{A}$ in Frobenius norm. This can be done by calculating the eigenvalue decomposition $\mt{A} = \mt{U}\mt{D}\mt{U}^T$ where $\mt{U}$ is orthogonal and $\mt{D}$ is diagonal, and returning $\mt{U}\tilde{\mt{D}}\mt{U}^T$, where $\tilde{\mt{D}}$ is $\mt{D}$ with zeroed negative entries \citep{Higham88}. If we assume a diagonal external covariance, then this procedure is equivalent to rounding up the estimate of $\sigma^2(a)$ to zero, as done in the Scoring algorithm.
For a budget of $B$, the full algorithm performs $Bd$ SVDs to calculate pseudo-inverses. Note, however, that the largest matrix that might be decomposed here is of size $\min(d,B)\times \min(d,B)$. Furthermore, in practice the matrices can be much smaller, since the algorithm might choose several repeats of the same features. In our experiments, the total time for decompositions, using standard libraries on a standard personal computer, has been negligible.

Our description of the algorithms above assumes for simplicity that the mean of all features is zero. In practice, one adds a `free' feature that is always 1, to allow for biased regressors. For the Scoring algorithm, one should further subtract the empirical mean from each feature. For the full algorithm, this not necessary, because when bias is allowed, adding a constant to any feature provably will not change the output of the full algorithm.

\section{Experiments}\label{sec:experiments}
\begin{figure}[b]
\fbox{\includegraphics[width = 0.95\linewidth]{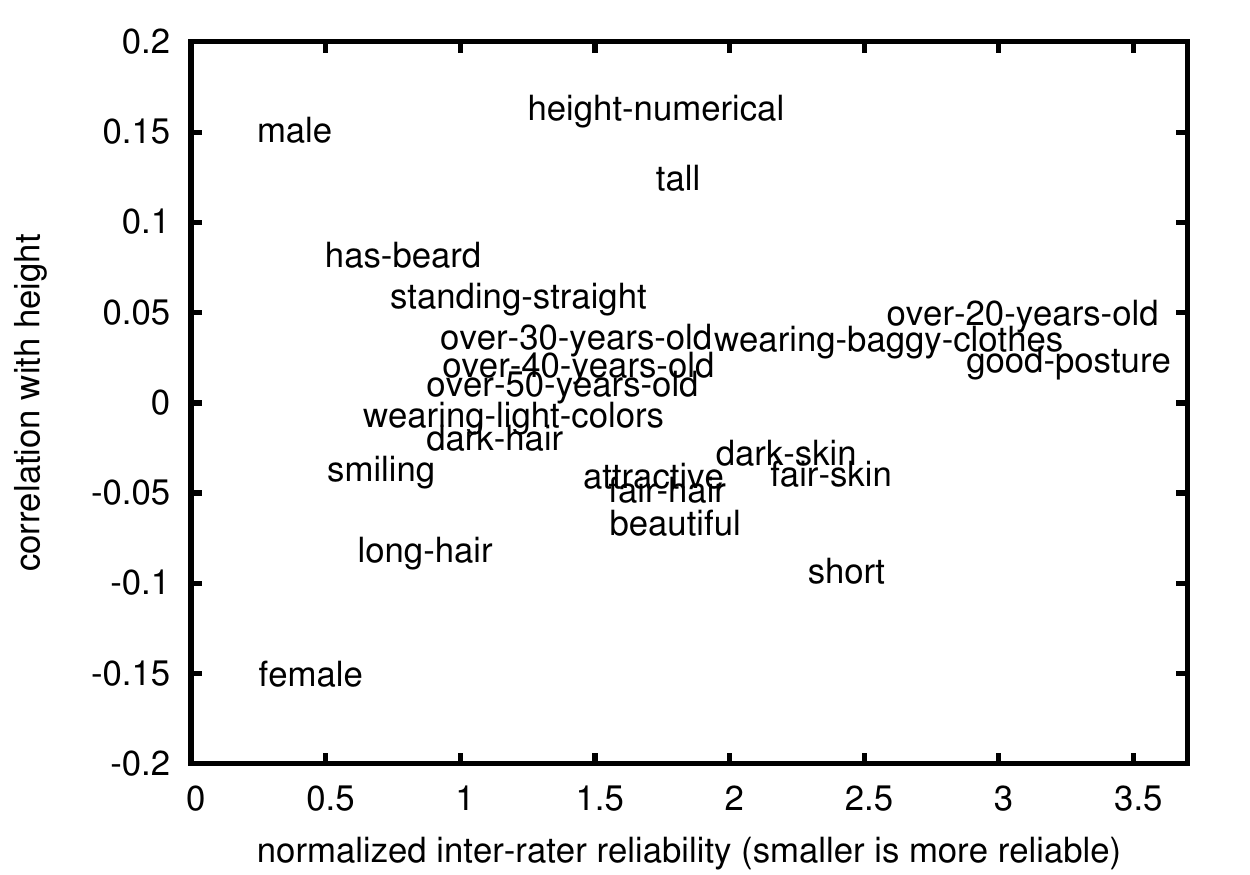}}
\caption{Properties of selected attributes for height prediction}
\label{fig:heightatt}
\end{figure}
\begin{figure}[b]
\fbox{\includegraphics[width = 0.95\linewidth]{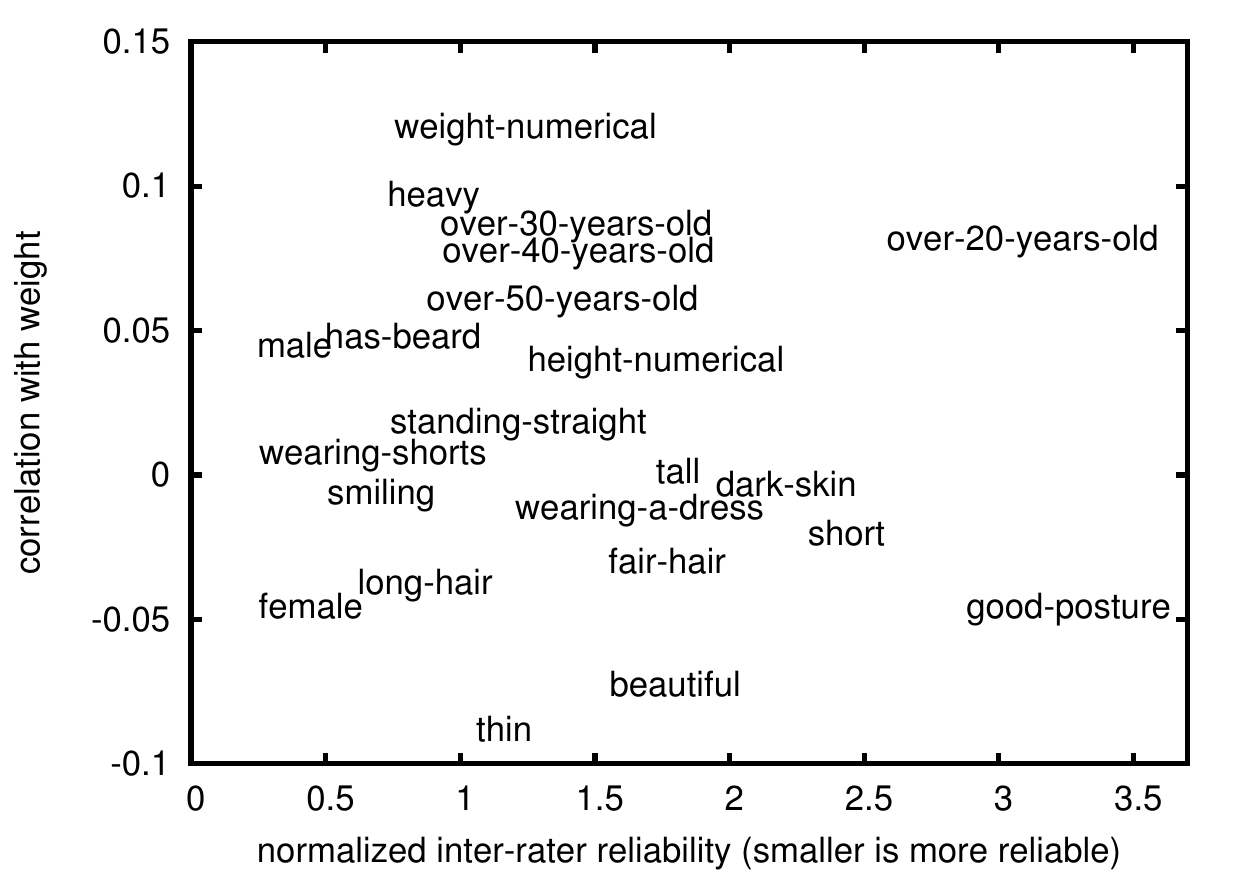}}
\caption{Properties of selected attributes for weight prediction}
\label{fig:weightatt}
\end{figure}

We tested our approach on three regression problems. In the first problem the feature judgments were simulated. In the second and third problem they were collected from the crowd using Amazon's Mechanical Turk.\footnote{{\tt http://mturk.com}.  We will share our data upon request from other researchers, due to the sensitivity of judgments on people's images.}

For the simulated experiment we used the UCI dataset `Relative location of CT slices on axial axis Data Set' \citep{UCI10}. In this dataset the features are histograms of spatial measurements in the image, and the label to predict is the relative location of the image on the axial axis.
To simulate features with varying judgments, we collapsed each set of 8 adjacent histogram bins into a single feature, so that each judgment of the new feature was randomly chosen out of 8 possible values for this feature. The resulting dataset contained 48 noisy real-valued features per example. 

The second and third problems were to predict the height and weight of people from a photo.
880 photos with self-declared height and weight were extracted from the publicly available \href{http://www.cockeyed.com/photos/bodies/heightweight.html}{Photographic Height/Weight Chart} \cite{HeightWeight}, where people post pictures of themselves announcing their own height and weight. We chose 37 attributes that we felt the crowd could judge and might be predictive. %\todo{what is good work? what ground truth did we use?}
We collected judgments for these binary attributes, mainly following the judgment collection methodology of \citet{PH12}, by batching the images into groups of 40, making labeling very efficient. To encourage honest workers, we promised (and delivered) bonuses for good work. We further limited the amount of work any one person could do. We used all of the collected judgments, regardless of whether the workers received bonuses for them or not. Our pay per hour was set to average to minimum wage. We collected numerical estimates of the height and the weight in a similar fashion. Binary judgments took about one second per judgment and their cost was a fraction of a cent per attribute judgment. The numerical estimates took about four times as long and we paid four times as much for them. Accordingly, we adjusted all the algorithms to count a single numerical judgment as equal to four binary attribute judgments.

\figref{fig:heightatt} and \figref{fig:weightatt} show the normalized correlation ($\hat{\cov}[a]/\hat{\sigma}[a]$) vs. the normalized inter-rater reliability ($\hat{v}[a]/\hat{\sigma}[a]$) of selected attributes. These plots demonstrate that all combinations of useful/non-useful and stable/noisy attributes exist in this data. The full data listing all the attributes and their properties is provided in \tabref{tab:atts}.

\begin{table}
\csvautotabular{attribute_corrs.csv}
\caption{All attributes used for height and weight prediction. `var' indicates the normalized inter-rater variability ($\hat{v}[a]/\hat{\sigma}[a]$). `height'  and `weight' indicate the estimated quality of each of the attributes for the respective prediction task ($\hat{\cov}[a]/\hat{\sigma}[a]$).}
\label{tab:atts}
\end{table}

\tabref{tab:atts} Lists all the attributes that were collected for the height and weight prediction problem, their internal variance and their normalized correlation for each of the prediction tasks.

We compared the test error of our algorithms, denoted `Full' and `Scoring' in the plots, to those of several plausible baselines. In all comparisons, we set $\rtrain = 2$. The first baseline, denoted `Averages' in the plots, is based on the ``predictive'' feature selection algorithm of \citet{Memorability}: We first average the 2 judgments per attribute to create a standard data set with one value for each object-attribute pair, and then greedily add attributes, one at a time, so as to minimize the least-squares error. The resulting regressor uses $2$ judgments for each selected feature.
The second baseline, denoted `Copies', treats the 2 judgments of each feature-object pair as 2 different individual attributes, and again performs greedy forward selection on these features. Here the test repeat vector $\br$ was set according to the number of copies selected for each feature. Note that these baselines perform standard Machine Learning feature selection: `Averages' considers $d$ features and `Copies' considers $2d$ features.
For height and weight prediction, we compared the results also to the test error achieved by averaging only the height or weight estimates of the crowd, respectively. Since each numerical feature costs 4 times as much as a binary feature, we averaged over $B/4$ numerical judgments when the budget was set to $B$. We did not use regularization anywhere, thus our algorithms and the baselines are all parameter-free.

The test error presented in the plots was obtained as follows: $\br$ was selected based on a training set with $\brtrain$ judgments. We then added judgments to features in the training set to get to $\br$ repeats. Finally we performed regular regression on the means of the enhanced training set to get a predictor. This predictor was then used to predict the labels of the test set with $\br$ judgments. In all the comparisons, each experiment was averaged over 50 random train/test splits. In all of the experiments, shown in figures \ref{fig:slice}-\ref{fig:w3}, our full algorithm achieved better test error than the baselines. The Scoring algorithm was usually somewhat worse than the Full Multi-Selection algorithm,
and for small budgets also sometimes worse than the baselines, This is expected due to its zero-correlation assumption. However, when the sample size was small, the Scoring algorithm was sometimes better (see e.g., \figref{fig:h3}), since it suffered from less over-fitting. This is consistent with our convergence analysis in \thmref{thm:convergence}. Analysis of training errors indicates that baseline algorithms suffer for two different reasons: (1) they are limited to a small number of repeats per feature; and (2) they suffer from greater over-fitting. The second reason is probably due to the fact that our algorithm tends to select a sparser $\br$ than do the baselines. Table \ref{tab:pred} shows examples of predictors, with number of judgments for each attribute, learned by our full algorithm.

\begin{figure}[h]
\includegraphics[width = 0.95\linewidth]{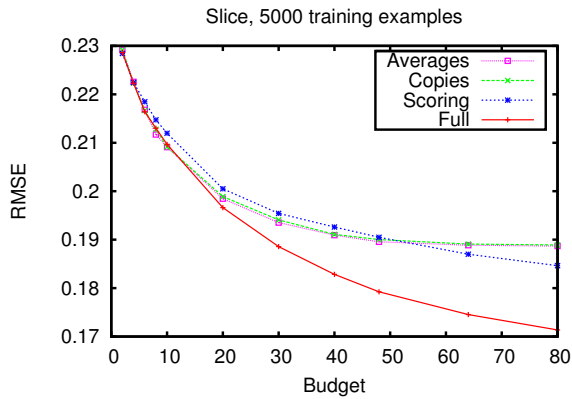}
\caption{Comparison for the `Slice' dataset.
\vspace{-.2in}}
\label{fig:slice}
\end{figure}
\begin{figure}[h]
\includegraphics[width = 0.95\linewidth]{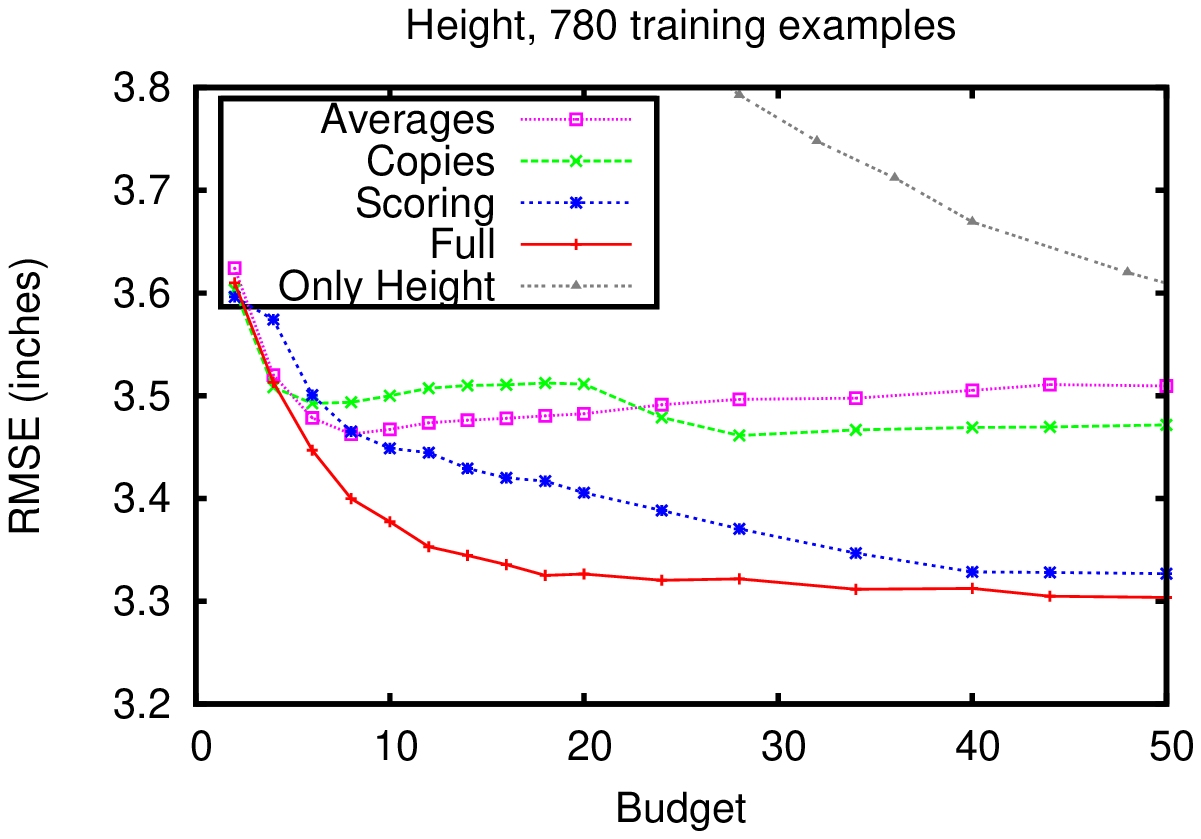}
\caption{Comparing algorithms}
\label{fig:h}
\end{figure}

\begin{figure}[h]
\includegraphics[width = 0.95\linewidth]{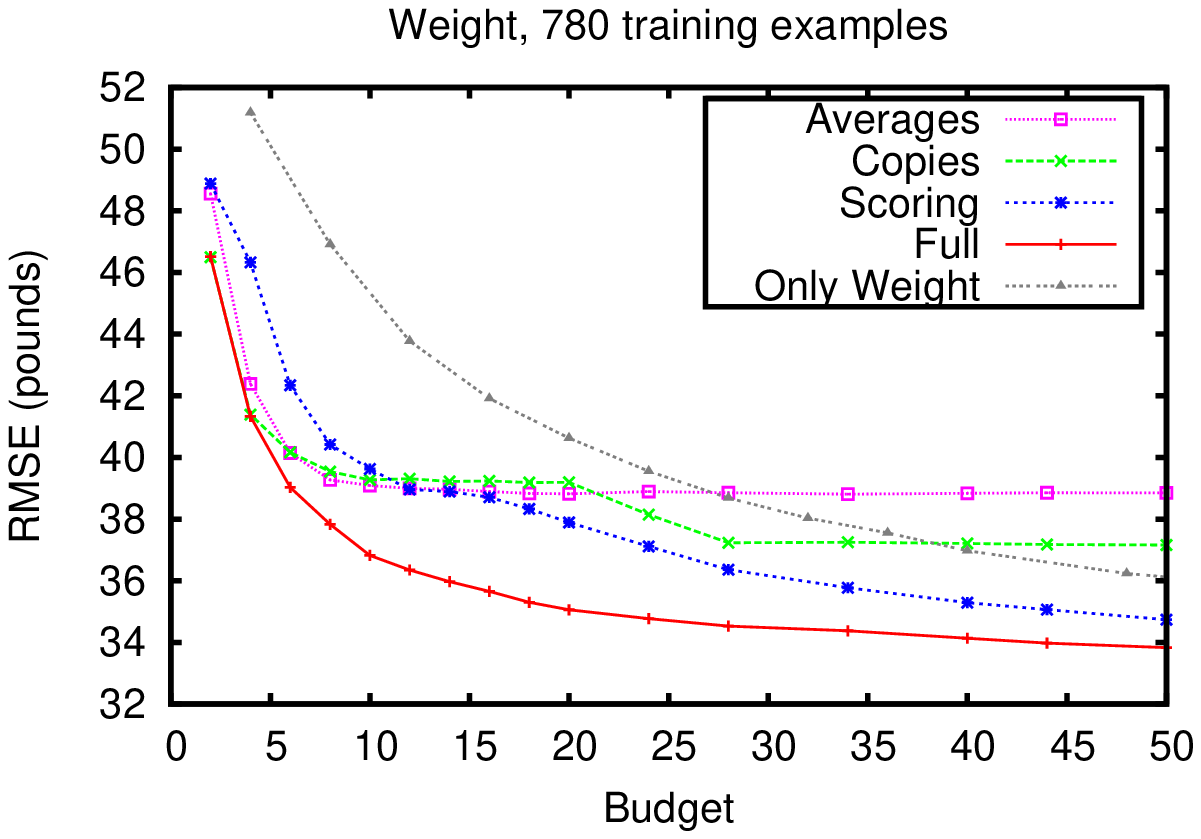}
\caption{Comparing algorithms}
\label{fig:w}
\end{figure}
\begin{figure}[h]
\includegraphics[width = 0.95\linewidth]{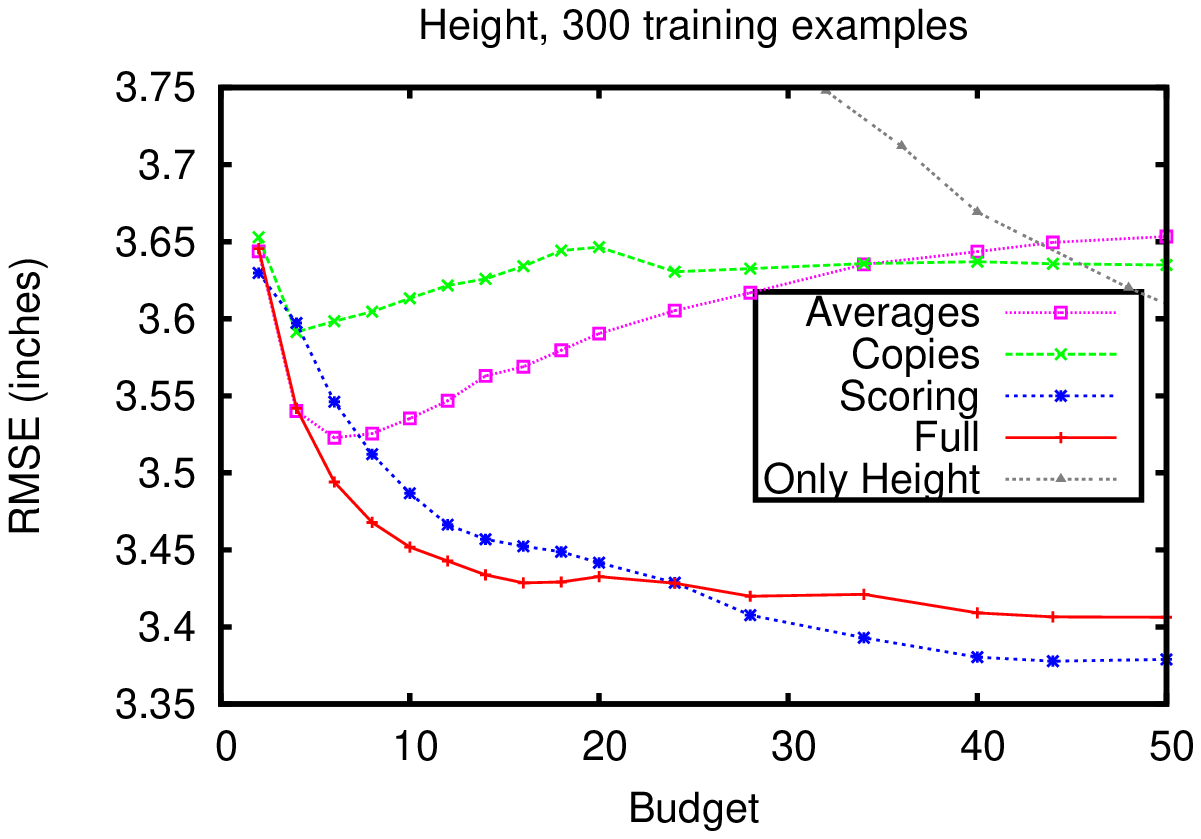}
\caption{Comparing algorithms}
\label{fig:h3}
\end{figure}
\begin{figure}[h]
\includegraphics[width = 0.95\linewidth]{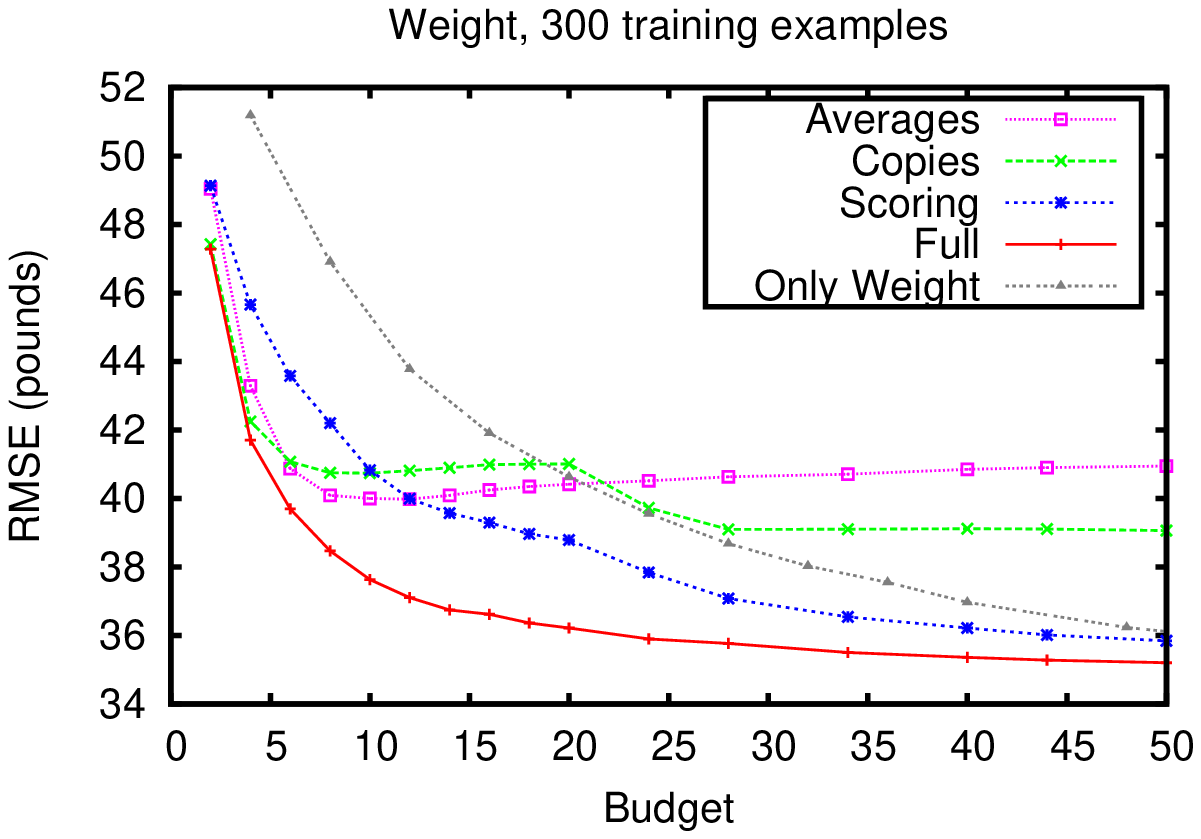}
\caption{Comparing algorithms}
\label{fig:w3}
\end{figure}
\begin{table}[h]
\center
Height (inches)\\
{\small{
\csvautotabular{height_pred.csv}
%}}
%\caption{Examples of predictors generated by the full algorithm for prediction of height (top) and weight (bottom). For each attribute, the coefficient is multiplied by the average of this attribute over the selected number of judgment repeats.}
%\label{tab:pred}
%\end{table}
%\begin{table}[b]
%{\small{
\vspace{1em}

Weight (pounds)\\
\csvautotabular{weight_pred.csv}}}
\caption{Examples of predictors generated by the full algorithm for prediction of height and weight. For each attribute and image, the selected number of judgment repeats is collected, and the coefficient of the attribute is multiplied by the fraction of judgments that designated this attribute as true for the image.}
\label{tab:pred}
\end{table}

\begin{figure}[h]
\includegraphics[width = 0.95\linewidth]{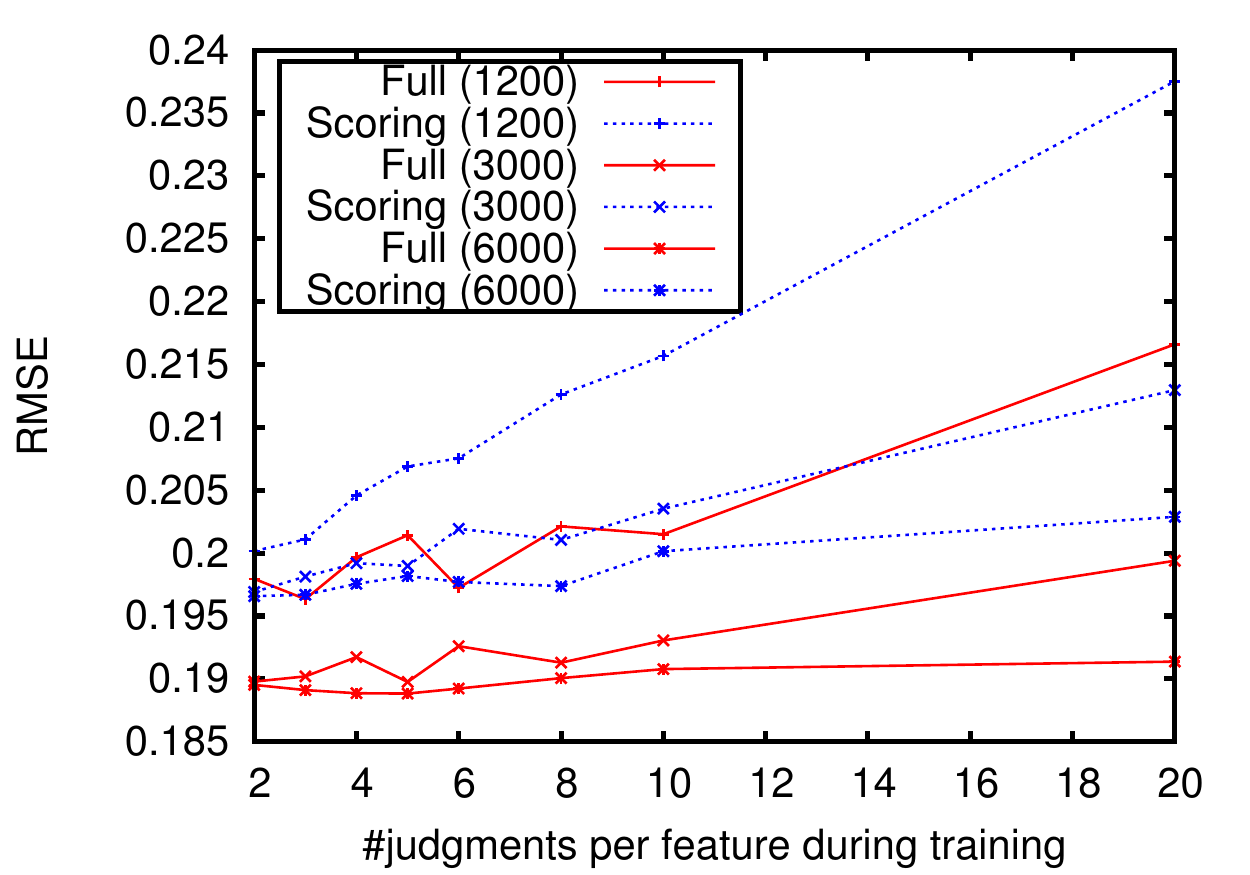}
\caption{`Slice' dataset: Training loss with different numbers of judgments per feature during training, when $\#$training examples$\times\#$repeats is kept constant. Numbers in legend indicate value of constant. }
\label{fig:tradeoff}
\end{figure}

In our last experiment we tested the tradeoff between the number of training judgments per feature, and the number of training examples, in the following setting: Suppose we have a budget that allows us to collect a total of $M$ judgments for training the feature multi-selection algorithm, and we have access to at least $M/2d$ labeled examples. We can decide on a number $\rtrain$ of judgments per feature, randomly select $M/\rtrain d$ objects from our labeled pool to serve as the training set, and obtain $\rtrain d$ judgments for each of these objects. What number $\rtrain$ should we choose? Does this number depend on the total budget $M$? 
We compared the test error arising from different values of $\rtrain$ over different values of $M$, for the slice dataset using both of our algorithms,.
The results are shown in \figref{fig:tradeoff}. These results show a clear preference for a small $\rtrain$ (which allows a large $m$ on the same budget $M$). Characterizing the optimal number $\rtrain$ is left as an open question for future work.

\section{Conclusions}

We introduce the problem of \emph{feature multi-selection}
and provide two algorithms for the case of regression with mean averaging of judgments.
Future directions of research include other learning tasks, such as classification,
and other types of feature aggregation, such as median averaging (which, for binary features,
is equivalent to taking the majority). An additional important question for future work is
how to carry out feature multi-selection in an environment with a changing crowd.

% Acknowledgements should only appear in the accepted version.
\section*{Acknowledgments}
 We wish to thank Edith Law and Haoqi Zhang for several helpful discussions.

% In the unusual situation where you want a paper to appear in the
% references without citing it in the main text, use \nocite

%\clearpage

\bibliography{bib}
\bibliographystyle{icml2013}

%\clearpage

\appendix

\iffalse
\begin{center}
\hrule
{\Large{\textbf{\ourtitle}}}
\hrule
\end{center}
\fi

\section{Analysis}\label{app:analysisproofs}

\subsection{Notations}

For a symmetric matrix $\mt{S}$, let $\lambda_{\max}(\mt{S})$ and $\lambda_{\min}(\mt{S})$ be the largest and the smallest eigenvalues of $\mt{S}$, respectively. For functions $\alpha$ and $\beta$, 
we say that $\alpha \leq O(\beta)$
 if for some constants $C,C'>0$, $\alpha \leq C\beta  + C'$. Similarly, $\alpha \geq \Omega(\beta)$ indicates that for some constants $C,C'>0$, $\alpha \geq C\beta - C'$.
we say that $\alpha \leq \tilde{O}(\beta)$
 if for some constants $C,C'>0$, $\alpha \leq C\beta  \ln (\beta) + C'$. Similarly, $\alpha \geq \tilde{\Omega}(\beta)$ indicates that for some constants $C,C'>0$, $\alpha \geq C\beta  \ln (\beta) - C'$.

Denote by $\mt{Z}_\br$ a diagonal $d\times d$ matrix whose $i$'th diagonal entry is $\one[r[i] > 0]$.
Recall that $v[i]$ is the internal variance of feature $i$. For a vector $\br \in \nats^d$, let $\mt{V}_\br$ be the diagonal matrix such that its $i$'th diagonal entry is zero if $r[i] = 0$ and equal to $v[i]/r[i]$ otherwise. Let $\hat{\mt{V}}_\br$ be defined similarly but using the sample estimate $\hat{v}[i]$, defined in \eqref{eq:hatv}, instead of $v[i]$.
Denote by $n_\br$ the number of non-zero entries in $\br \in \nats^d$.

Let $\vc{x} \in \reals^{[\brtrain]}$ be an example with a repeat vector $\brtrain$ such that $k[i] \geq 2$.\footnote{\algref{alg:all} can be applied to $\brtrain$ with different values per feature by simply using $k[a]$ instead of $k$ in step \ref{step:k} and $\mt{V}_\brtrain$ instead of $\Diag(\hat{\vc{v}})/k$ in step \ref{step:vk}. Our analysis holds for this more general algorithm.}
Let $\hat{v}(\vc{x})[i] = \mathrm{VarEst}(x[i](1),\ldots,x[i](\rtrain[i]))$. Given a repeat vector $\br$, define the $\br$-loss $\loss_\br$ of the labeled example $(\vc{x}, y)$ by:
\begin{align*}
&\loss_\br(\vc{w},\vc{x}, y)=\\ 
&\quad(\dotprod{\mt{Z}_\br\vc{w},\vc{\bar{x}}} - y)^2 + \sum_{i:r[i] > 0}w[i]^2 \hat{v}(\vc{x})[i](\frac{1}{r[i]} - \frac{1}{\rtrain[i]}).
\end{align*}

Let $S = ((\vc{x}_1,y_1),\ldots,(\vc{x}_m,y_m))$ be a training set of labeled representations drawn i.i.d.\ from $D_\brtrain$. Denote the vector of training labels by $\vc{y} = (y_1,\ldots,y_l)$. We denote the average of $\loss_\br$ over $S$ by
\[
\loss_\br(\vc{w}, S) = \frac{1}{m}\sum_{l \in [m]}\loss_\br(\vc{w}, \vc{x}_l, y_l).
\]
Define
\begin{align*}
\hat{\Sigma} &= \frac{1}{m}\sum_{l\in[m]}\vc{\bar{x}}_l \vc{\bar{x}}^T_l - \mt{\hat{V}}_\brtrain,\\
\hat{\Sigma}_{\br} &=\mt{Z}_{\br}(\hat\Sigma + \mt{\hat{V}}_{\br})\mt{Z}_{\br},\\
\hat{\vc{\cov}} &= \frac{1}{m}\sum_{l \in [m]} y_l \bar{\vc{x}}_l.
\end{align*}
Note that the notation for $\hat\Sigma$ here is \emph{different than the one used in \algref{alg:all}}, since $\hat{\Sigma}$ is \emph{not} `corrected' to be PSD.
We denote the corrected estimate used in the algorithm by $\hat{\Sigma}^p$. Similarly, we denote by $\hat{\Sigma}_\br^p$ the estimate for $\Sigma_\br$ resulting from using $\hat{\Sigma}^p$ instead of $\hat{\Sigma}$.
We have
\[
\loss_\br(\vc{w}, S) = \vc{w}^T \hat{\Sigma}_{\br} \vc{w} - 2 \vc{w}^T \mt{Z}_\br\vc{\hat{\cov}} + \frac{1}{m}\vc{y}^T\vc{y}.
\]
Define $\lambda_\br = \lambda_{\min}(\submat_\br(\Sigma))$. Note that for $\lambda$ defined in the statement of \thmref{thm:convergence}, $\lambda = \min_{\br \in R_B} \lambda_\br$.

\subsection{Proof of \thmref{thm:convergence}} \label{sec:convanalysis}
To prove \thmref{thm:convergence} we require several lemmas. All of the analysis below is under the assumption of \thmref{thm:convergence}, that all judgments and labels are in $[-1,1]$ with probability 1.

First, the following lemma links
the covariance matrix of the population when averaging $\br$ judgments to the covariance matrix of the population when using the expected values of the features for each object.
\begin{lemma}\label{lem:sigma}
Let $\br \in \nats^d$. Then $\Sigma_{\br} = \mt{Z}_\br\Sigma\mt{Z}_\br + \mt{V}_\br.$
\end{lemma}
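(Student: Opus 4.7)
The plan is to compute the entries of $\Sigma_\br = \E_\br[\bar{\vc{X}}^T\bar{\vc{X}}]$ directly, by conditioning on the latent object $O$ and invoking the conditional independence assumption on judgments of different attribute-object pairs.

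First I would split into cases according to the support of $\br$. If $r[i]=0$, then by definition $\bar{X}[i]=0$, so row $i$ and column $i$ of $\Sigma_\br$ vanish. Simultaneously, the $i$th row and column of $\mt{Z}_\br\Sigma\mt{Z}_\br$ vanish (since $\mt{Z}_\br$ zeros them out), and the $i$th diagonal entry of $\mt{V}_\br$ is zero by definition. So the identity holds trivially on those indices, and it suffices to treat indices with $r[i]>0$, for which $\mt{Z}_\br$ acts as the identity.

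Next, for the off-diagonal entries with $r[i],r[j]>0$ and $i\neq j$, I would apply the tower property: $\E[\bar{X}[i]\bar{X}[j]] = \E_{O}[\E[\bar{X}[i]\mid O]\,\E[\bar{X}[j]\mid O]]$, using that the judgments of attribute $i$ on object $O$ are conditionally independent of those of attribute $j$ given $O$. Since averaging preserves conditional means, $\E[\bar{X}[i]\mid O] = \E[X[i]\mid O]$, so the off-diagonal equals $\E_O[\E[X[i]\mid O]\E[X[j]\mid O]] = \E_\infty[X[i]X[j]] = \Sigma_{ij}$, which matches the $(i,j)$ entry of $\mt{Z}_\br\Sigma\mt{Z}_\br$ (and $\mt{V}_\br$ contributes $0$ off the diagonal).

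For the diagonal entries with $r[i]>0$, I would use $\E[\bar{X}[i]^2\mid O] = \Var[\bar{X}[i]\mid O] + \E[\bar{X}[i]\mid O]^2$. Conditional independence of the $r[i]$ judgments given $O$ yields $\Var[\bar{X}[i]\mid O] = \Var[X[i]\mid O]/r[i]$. Taking expectation over $O$ gives $\E_O[\Var[X[i]\mid O]]/r[i] + \E_O[\E[X[i]\mid O]^2] = v[i]/r[i] + \Sigma_{ii}$, which is exactly the $(i,i)$ entry of $\mt{Z}_\br\Sigma\mt{Z}_\br + \mt{V}_\br$. Assembling the three cases coordinate-wise yields the claimed matrix identity.

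There is no real obstacle here; the computation is a standard bias-variance decomposition for averages of conditionally i.i.d.\ random variables, and the only care needed is bookkeeping around the $r[i]=0$ coordinates, which is why the statement involves $\mt{Z}_\br$ on both sides of $\Sigma$ rather than $\Sigma$ alone.
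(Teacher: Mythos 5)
Your proposal is correct and follows essentially the same route as the paper's proof: condition on the object $O$, use conditional independence of judgments to handle off-diagonal and diagonal entries separately (your variance-of-the-mean step is just a compact rephrasing of the paper's explicit expansion of the squared average), and dispose of the $r[i]=0$ coordinates trivially. No gaps.
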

\begin{proof}
Consider a random object $O$ drawn from $\cD_O$, and define the vector $\mu_O$ such that for each coordinate $i$,
$\mu_O[i] = \E[X[i]|O]$. Define $\Sigma_{O} = \mu_O \mu^T_O$, so that $\Sigma = \E_{O \sim \cD_O}[\Sigma_{O}]$. 
Similarly, define $\Sigma_{\br|O} = \E_\br[\vc{\bar{X}}_\br^T \vc{\bar{X}} \mid O]$, so that $\Sigma_{\br} = \E_{O \sim \cD_O}[\Sigma_{\br|O}]$.
Lastly, denote the variance of a feature on object $O$ by $v_O[i]= \E[(X[i] - \E[X[i]|O])^2|O]$,
and let $\mt{V}_{\br|O}$ be the diagonal matrix whose $i$'th entry is zero if $r[i] = 0$,
and equal to $v_O[i]/r[i]$ otherwise. Again $\mt{V}_\br = \E_O[\mt{V}_{\br|O}]$.
We will prove that for any object $O$, 
\begin{equation}\label{eq:sigmae}
\Sigma_{\br|O} = \mt{Z}_\br\Sigma_{O}\mt{Z}_\br + \mt{V}_{\br|O}.
\end{equation}
The desired equality will follow by averaging over $O \sim \cD_O$. 

Denote the entries of $\Sigma_{\br|O}$ by $s_{ik}$. First, whenever $r[i] = 0$ or $r[k] = 0$, entry $(i,k)$ is zero for both sides of \eqref{eq:sigmae}. Now, consider a non-diagonal entry $s_{ik}$ of $\Sigma_{\br|O}$ for $i \neq k$, $r[i] >0$ and $r[k] > 0$. The $(i,k)$ entry on the right-hand side of \eqref{eq:sigmae} is $\mu_O[i]\mu_O[k]$. For the left-hand side we have 
\begin{align*}
s_{ik} &= \E_\br[\bar{X}[i]\bar{X}[k]\mid O] = \E_\br[\bar{X}[i]\mid O]\cdot \E_\br[\bar{X}[k]\mid O]\\
&= \mu_O[i]\mu_O[k].
\end{align*}
Thus the equality holds for non-diagonal entries.

Now, consider a diagonal entry $s_{ii}$ of $\Sigma_{\br}$. If $r[i] = 0$ then both sides of \eqref{eq:sigmae} are zero. 
Assume $r[i] > 0$. We have
\begin{align*}
s_{ii} &= \E_\br[\bar{X}[i]^2 \mid O] = \E_\br[(\frac{1}{r[i]}\sum_{j \in [r[i]]} X[i](j))^2 \mid O] \\
&= \frac{1}{r[i]^2}\big(\sum_{j \in [r[i]]}\E_\br[(X[i](j))^2 \mid O] +\\ 
&\quad2\cdot\sum_{j < j', j,j' \in [r[i]]}\E_\br[X[i](j)\cdot X[i](j') \mid O]\big).
\end{align*}
Conditioned on $O$, $X[i](j)$ and $X[i](j')$ are statistically independent.  Therefore
\begin{align*}
&\E_\br[X[i](j)\cdot X[i](j')\mid O] \\&\quad= \E_\br[X[i](j) \mid O]\cdot \E[X[i](j') \mid O] = \mu_{O}[i]^2.
\end{align*}
In addition, $\E_\br[(X[i](j))^2] = v_O[i] + \mu_{O}[i]^2$. Combining the two equalities we get 
\begin{align*}
s_{ii} &= \frac{1}{r[i]}(v_O[i] + \mu_{O}[i]^2) + \frac{1}{r[i]^2}\cdot(r[i]^2 - r[i]) \mu_{O}[i]^2\\
&= \mu_{O}[i]^2 + v_O[i]/r[i].
\end{align*}
This is exactly the value of entry $(i,i)$ on the right-hand side of \eqref{eq:sigmae}.
We conclude that \eqref{eq:sigmae} holds for all types of entries, thus the lemma is proved.
\end{proof}

Using \lemref{lem:sigma}, we can show that $\loss_\br(\vc{w},S)$ is an unbiased estimator of $\loss(\vc{w}, D_{\br})$.
\begin{lemma}\label{lem:rdiff}
Let $\brtrain,\br \in \nats^d$, so that  $\forall i \in [d], k[i] \geq 2$. Let $S$ be a sample drawn i.i.d.\ from $D_\brtrain$.
For any $\vc{w} \in \reals^d$, $\loss(\vc{w}, D_{\br}) = \E_S[\loss_\br(\vc{w},S)].$
\end{lemma}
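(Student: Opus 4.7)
The plan is to show the equality by expanding both sides as quadratic forms in $\vc{w}$ and matching the quadratic, linear, and constant coefficients. The key algebraic identity behind the proof is that the correction term $\hat{v}(\vc{x})[i](1/r[i]-1/k[i])$ inside $\loss_\br$ is exactly what turns an estimate of $\Sigma_\brtrain$ into an unbiased estimate of $\Sigma_\br$, by converting the ``extra'' noise coming from averaging only $k[i]$ judgments into the target noise coming from averaging $r[i]$ judgments.

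First, I would rewrite $\loss_\br(\vc{w}, S)$ as the quadratic form already displayed above the lemma, namely
\[
\loss_\br(\vc{w}, S) = \vc{w}^T \hat{\Sigma}_\br \vc{w} - 2\vc{w}^T \mt{Z}_\br \hat{\vc{\cov}} + \tfrac{1}{m}\vc{y}^T\vc{y},
\]
verifying that the penalty term $\sum_{i:r[i]>0} w[i]^2\hat{v}(\vc{x}_l)[i](1/r[i]-1/k[i])$, once averaged over $l$, contributes exactly $\vc{w}^T\mt{Z}_\br(\hat{\mt{V}}_\br-\hat{\mt{V}}_\brtrain)\mt{Z}_\br\vc{w}$, which combines with $\mt{Z}_\br(\tfrac{1}{m}\sum_l\bar{\vc{x}}_l\bar{\vc{x}}_l^T)\mt{Z}_\br = \mt{Z}_\br(\hat{\Sigma}+\hat{\mt{V}}_\brtrain)\mt{Z}_\br$ to produce $\hat{\Sigma}_\br$.

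Next I would expand $\loss(\vc{w}, D_\br)$ in the same form. Since $\bar{X}[i]=0$ whenever $r[i]=0$, we can replace $\vc{w}$ by $\mt{Z}_\br\vc{w}$ in the inner product, yielding
\[
\loss(\vc{w}, D_\br) = \vc{w}^T \Sigma_\br \vc{w} - 2\vc{w}^T \E_\br[\bar{\vc{X}}Y] + \E[Y^2].
\]
A short conditional-independence argument (the judgments for feature $i$ are i.i.d.\ conditional on $O$, so $\E_\br[\bar X[i]Y]=\E[X[i]Y]=\cov[i]$ whenever $r[i]>0$) shows $\E_\br[\bar{\vc{X}}Y]=\mt{Z}_\br\vc{\cov}$. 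Matching the three pieces of the two quadratic forms then reduces the lemma to checking that $\E_S[\hat{\Sigma}_\br]=\Sigma_\br$, $\E_S[\mt{Z}_\br\hat{\vc{\cov}}]=\mt{Z}_\br\vc{\cov}$, and $\E_S[\tfrac{1}{m}\vc{y}^T\vc{y}]=\E[Y^2]$.

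The latter two are immediate from linearity of expectation. For the first, I would use that the unbiased sample-variance estimator satisfies $\E[\hat v[i]]=v[i]$ (by conditioning on $O$ and using that $X[i](1),\dots,X[i](k[i])$ are i.i.d.\ with conditional variance $v_O[i]$), so $\E_S[\hat{\mt{V}}_\br]=\mt{V}_\br$ and $\E_S[\hat{\mt{V}}_\brtrain]=\mt{V}_\brtrain$; hence $\E_S[\hat\Sigma]=\Sigma_\brtrain-\mt{V}_\brtrain$, which equals $\Sigma$ by \lemref{lem:sigma} applied to $\brtrain$ (whose entries are all positive, so $\mt{Z}_\brtrain=I$). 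Plugging in gives $\E_S[\hat{\Sigma}_\br]=\mt{Z}_\br(\Sigma+\mt{V}_\br)\mt{Z}_\br=\mt{Z}_\br\Sigma\mt{Z}_\br+\mt{V}_\br=\Sigma_\br$, again by \lemref{lem:sigma}. There is no real obstacle here; the only thing to be careful about is that $\mt{V}_\br$ and $\hat{\mt{V}}_\br$ already have zero rows and columns where $r[i]=0$, so conjugation by $\mt{Z}_\br$ leaves them unchanged, and that the VarEst estimator uses the $n-1$ normalization so that the expectation argument actually gives $v[i]$ rather than a biased multiple of it.
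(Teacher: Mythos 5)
Your proposal is correct and follows essentially the same route as the paper's proof: both sides are expanded as quadratic forms in $\vc{w}$, the unbiasedness of $\hat{\vc{\cov}}$, $\hat{v}[i]$ (via the $n-1$ normalization) and $\tfrac{1}{m}\vc{y}^T\vc{y}$ is invoked, and \lemref{lem:sigma} is used to identify $\E_S[\hat{\Sigma}_\br]$ with $\Sigma_\br$. Your explicit verification that $\E_\br[\bar{\vc{X}}Y]=\mt{Z}_\br\vc{\cov}$ via conditional independence just fills in a step the paper states without comment, so there is no substantive difference.
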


\begin{proof}
We have 
\begin{align*}
 &\E_S[\loss_\br(\vc{w},S)]= \E_S[\vc{w}^T \hat{\Sigma}_{\br} \vc{w} - 2 \vc{w}^T \mt{Z}_\br \vc{\hat{\cov}} + \frac{1}{m}\vc{y}^T\vc{y}] \\
&\quad=\vc{w}^T \E_S[\hat{\Sigma}_{\br}] \vc{w} - 2 \vc{w}^T \mt{Z}_\br \E_S[\vc{\hat{\cov}}] + \E_S[\frac{1}{m}\vc{y}^T\vc{y}].
\end{align*}
From the definition of $\hat{\Sigma}_\br$, we have 
\begin{align*}
\E_S[\hat{\Sigma}_{\br}] &= \mt{Z}_\br\E_S[\frac{1}{m}\sum_{l\in[m]}\vc{\bar{x}}_l \vc{\bar{x}}^T_l + \mt{\hat{V}}_{\br} -\mt{\hat{V}}_{\brtrain}]\mt{Z}_\br\\
&= \mt{Z}_\br(\Sigma_{\brtrain} + \mt{V}_\br - \mt{V}_\brtrain)\mt{Z}_\br = \Sigma_\br,
\end{align*}
Where the last inequality follows from \lemref{lem:sigma}.

In addition, $\E[\hat{\vc{\cov}}] = \vc{\cov}$, and $\E[\frac{1}{m}\vc{y}^T \vc{y}] = \E[Y^2]$.
Therefore 
\[
\E_S[\loss_\br(\vc{w},S)] = \vc{w}^T \Sigma_{\br} \vc{w} - 2 \vc{w}^T \mt{Z}_\br\vc{\cov} + \E[Y^2].
\]
On the other hand, we have
\begin{align*}
&\loss(\vc{w}, D_{\br}) = \E_\br[(\dotprod{\vc{w},\bar{\vc{X}}} - Y)^2]\\
&\quad= \E_{\br}[(\vc{w}^T \bar{\vc{X}} - Y)(\bar{\vc{X}}^T\vc{w} - Y)] \\
&\quad= \vc{w}^T\E[\bar{\vc{X}} \bar{\vc{X}}^T]\vc{w} - 2\vc{w}^T \E_{\br}[\bar{\vc{X}} Y] + \E[Y^2]\\
&\quad= \vc{w} \Sigma_{\br} \vc{w} - 2 \vc{w}^T \mt{Z}_\br\vc{\cov} + \E[Y^2] = \E_S[\loss_\br(\vc{w},S)].
\end{align*}
This completes the proof.
\end{proof}

The next step is to bound the rate of convergence of $\min_{\vc{w} \in \reals^d} \loss_\br(\vc{w},S)$ to $\loss(\br) = \min_{\vc{w} \in \reals^d}\loss(\vc{w},D_\br)$. We show that as $m$ grows, 
the difference between the two quantities approaches zero. The following lemma provides guarantees under the assumption that the two minimizers have a bounded norm. We will then go on to show that such a bound on the norm holds with high probability, where the bound depends on the external covariance matrix $\Sigma$.
\begin{lemma}\label{lem:rad}
Let $\alpha > 0$, and let $W_\alpha = \{ \vc{w} \in \reals^d \mid \norm{\vc{w}} \leq \alpha\}$. Let $\delta \in (0,1)$. Fix some $\vc{w}^* \in W_\alpha$, and let $\hat{\vc{w}} \in \argmin_{\vc{w} \in \reals^d} \loss_\br(\vc{w},S)$ such that $\norm{\hat{\vc{w}}}$ is minimal.
With probability at least $1-\delta$ over the draw of $S$, if $\norm{\vc{\hat{w}}} \leq \alpha$, then
\[
|\loss(\vc{w}^*,D_\br) - \loss_\br(\vc{\hat{w}},S)| \leq  O\left(\frac{\alpha^2 n_\br\ln(e/\delta)}{\sqrt{m}}\right).
\]
\end{lemma}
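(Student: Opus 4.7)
The plan is to establish uniform convergence of the sample loss $\loss_\br(\cdot, S)$ to the population loss $\loss(\cdot, D_\br)$ on $W_\alpha$, and then to combine it with the optimality of $\hat{\vc{w}}$. The key simplification is that both losses are quadratic in $\vc{w}$, so uniform convergence reduces to concentration of the empirical coefficient matrix and vector, avoiding the need for any covering-number machinery.

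First, following the calculation at the end of the proof of \lemref{lem:rdiff}, one has
\begin{align*}
\loss_\br(\vc{w}, S) &= \vc{w}^T \hat\Sigma_\br \vc{w} - 2\vc{w}^T \mt{Z}_\br \hat{\vc{\cov}} + \tfrac{1}{m}\vc{y}^T\vc{y},\\
\loss(\vc{w}, D_\br) &= \vc{w}^T \Sigma_\br \vc{w} - 2\vc{w}^T \mt{Z}_\br \vc{\cov} + \E[Y^2].
\end{align*}
Subtracting gives a quadratic form in $\vc{w}$ whose coefficients $\hat\Sigma_\br - \Sigma_\br$, $\mt{Z}_\br(\hat{\vc{\cov}} - \vc{\cov})$, and $\tfrac{1}{m}\vc{y}^T\vc{y} - \E[Y^2]$ are empirical means of $O(1)$-bounded i.i.d.\ quantities, using that judgments and labels lie in $[-1,1]$ (which also bounds $\hat{v}(\vc{x})[i]$ deterministically). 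For $\vc{w} \in W_\alpha$ this deviation is at most $\alpha^2\|\hat\Sigma_\br - \Sigma_\br\|_{op} + 2\alpha\|\mt{Z}_\br(\hat{\vc{\cov}} - \vc{\cov})\| + \bigl|\tfrac{1}{m}\vc{y}^T\vc{y} - \E[Y^2]\bigr|$ in absolute value.

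Next I would apply Hoeffding entry-wise, followed by a union bound over the at most $n_\br^2$ active matrix entries and $n_\br$ active vector coordinates. This yields, with probability $\geq 1-\delta$, $\|\hat\Sigma_\br - \Sigma_\br\|_{op} \leq \|\hat\Sigma_\br - \Sigma_\br\|_F \leq O\bigl(n_\br\sqrt{\ln(n_\br/\delta)/m}\bigr)$, $\|\mt{Z}_\br(\hat{\vc{\cov}} - \vc{\cov})\| \leq O\bigl(\sqrt{n_\br\ln(n_\br/\delta)/m}\bigr)$, and the analogous scalar bound. Combining these gives uniform convergence:
\[
\sup_{\vc{w}\in W_\alpha}\bigl|\loss_\br(\vc{w}, S) - \loss(\vc{w}, D_\br)\bigr| \leq \epsilon, \qquad \epsilon = O\bigl(\alpha^2 n_\br \ln(e/\delta)/\sqrt{m}\bigr).
\]

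The lemma then follows by a standard two-sided argument. The assumption $\|\hat{\vc{w}}\| \leq \alpha$ places $\hat{\vc{w}}$ in $W_\alpha$, so the uniform bound applies to both $\vc{w}^*$ and $\hat{\vc{w}}$. By optimality of $\hat{\vc{w}}$, $\loss_\br(\hat{\vc{w}}, S) \leq \loss_\br(\vc{w}^*, S) \leq \loss(\vc{w}^*, D_\br) + \epsilon$, giving one direction. The reverse direction is informative only when $\vc{w}^*$ is a (near-)minimizer of $\loss(\cdot, D_\br)$ on $W_\alpha$ (as is the intended use), in which case $\loss(\vc{w}^*, D_\br) \leq \loss(\hat{\vc{w}}, D_\br) \leq \loss_\br(\hat{\vc{w}}, S) + \epsilon$ by another application of uniform convergence to $\hat{\vc{w}}$.

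The main technical point is obtaining the $n_\br$ (rather than $d$) factor in the matrix-concentration bound. This comes for free from the structure $\hat\Sigma_\br = \mt{Z}_\br(\cdots)\mt{Z}_\br$: only the $n_\br \times n_\br$ sub-block carries stochastic nonzero entries, so the entry-wise Hoeffding plus union bound scales with $n_\br^2$ rather than $d^2$. A matrix-Bernstein-style bound could sharpen the logarithmic factors, but is not needed to reach the stated rate.
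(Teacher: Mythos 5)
Your argument is sound and reaches the stated conclusion, but it takes a genuinely different route from the paper. The paper treats $\loss_\br$ as a loss class over $W_\alpha$ and invokes Rademacher-complexity machinery: it splits $\loss_\br$ into the squared-loss part and the variance-correction part, bounds each complexity by $O(\alpha^2 n_\br/\sqrt{m})$ via Lipschitz/linear-class arguments, and handles the other direction by Hoeffding at the fixed $\vc{w}^*$ together with the optimality of $\hat{\vc{w}}$ (your first direction is essentially identical to this last step). You instead exploit the fact, established inside the proof of \lemref{lem:rdiff}, that both $\loss_\br(\cdot,S)$ and $\loss(\cdot,D_\br)$ are quadratic forms with coefficients $(\hat{\Sigma}_\br,\mt{Z}_\br\hat{\vc{\cov}},\frac1m\vc{y}^T\vc{y})$ and $(\Sigma_\br,\mt{Z}_\br\vc{\cov},\E[Y^2])$, so uniform convergence over $W_\alpha$ reduces to concentration of these empirical coefficients; entrywise Hoeffding plus a union bound over the $O(n_\br^2)$ active entries suffices, and restricting to the $n_\br\times n_\br$ block correctly produces the $n_\br$ (not $d$) dependence. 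Your approach is more elementary and self-contained; its only cost is an extra $\sqrt{\ln n_\br}$ factor, since the entrywise union bound leaves $\ln(n_\br/\delta)$ under the square root, so as written you prove $O(\alpha^2 n_\br\sqrt{\ln(n_\br/\delta)}/\sqrt{m})$ rather than the stated $O(\alpha^2 n_\br\ln(e/\delta)/\sqrt{m})$. This is immaterial downstream (\thmref{thm:convergence} already pays $\ln(Bd/\delta)$), and it can be removed if desired, e.g.\ by bounding $\E\|\hat{\Sigma}_\br-\Sigma_\br\|_F \le O(n_\br/\sqrt{m})$ and concentrating the Frobenius norm with McDiarmid, but you should flag that your bound is nominally slightly weaker than the lemma's. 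Finally, your caveat on the reverse direction---that $\loss(\vc{w}^*,D_\br)\le\loss_\br(\hat{\vc{w}},S)+\epsilon$ requires $\vc{w}^*$ to (near-)minimize $\loss(\cdot,D_\br)$ on $W_\alpha$---is exactly how the lemma is used (in \thmref{thm:singleconv}, $\vc{w}^*$ is the population minimizer, placed in $W_\alpha$ by \lemref{lem:normw}); the paper's own proof makes the same implicit restriction when it ``takes the minimum on both sides,'' so you lose nothing relative to the original argument by making it explicit.
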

\begin{proof}
By \lemref{lem:rdiff}, $\loss(\vc{w},D_\br) = \E_\brtrain[\loss_\br(\vc{w},\vc{X}, Y)]$. By Rademacher complexity bounds \cite{BartlettMe02}, with probability $1-\delta$, for all $\vc{w} \in W_\alpha$,
\begin{align}
&\E_\br[\loss_\br(\vc{w},\vc{X}, Y)] \leq \label{eq:rad}\\
&\quad\loss_\br(\vc{w},S) + \cR_m(\loss_\br \circ W_\alpha, D_\brtrain)+ O\left(\beta\sqrt{\ln(1/\delta)/m}\right),\notag
\end{align}
where $\cR_m(\loss_\br \circ W_\alpha, D_\brtrain)$ is the expected Rademacher complexity of the function class $W_\alpha$ under $\loss_\br$,
and $\beta$ is the maximal value of $\loss_\br$ on the possible inputs. Under our assumptions, $\beta \leq \alpha^2 n_\br$.

We wish to bound the Rademacher complexity for our function class, defined as
\begin{equation*}%\label{eq:radcompd}
\cR_m(\loss_\br \circ W_\alpha, D_\brtrain) = \frac{1}{m}\E_{S}[\E_\sigma[|\sup_{\vc{w} \in W_\alpha} \sum_{l\in[m]}\sigma_l \loss_\br(\vc{w},\vc{x}_l, y_l)|]],
\end{equation*}
where $\sigma = (\sigma_1,\ldots,\sigma_m)$ are $m$ independent uniform $\{\pm1\}$-valued variables, and $S = ((\vc{x}_1,y_1),\ldots,(\vc{x}_m,y_m))$ is a random sample drawn i.i.d.\ from $D_\brtrain$.
Denote the components of $\loss_\br$ by 
\begin{align*}
\loss^a_\br(\vc{w},\vc{x}, y) &= (\dotprod{\mt{Z}_\br\vc{w},\vc{\bar{x}}} - y)^2,\\
\loss^{b}_\br(\vc{w},\vc{x}, y) &= \sum_{i:r[i] > 0}w[i]^2 \hat{v}(\vc{x})[i](\frac{1}{r[i]} - \frac{1}{\rtrain[i]}).
\end{align*}
We have 
\[
\cR_m(\loss_\br \circ W_\alpha, D_\brtrain) \leq \cR_m(\loss^a_\br \circ W_\alpha, D_\brtrain) + \cR_m(\loss^{b}_\br \circ W_\alpha, D_\brtrain).
\]
We bound each of these Rademacher complexities individually. 
The first term is the Rademacher complexity of the squared loss over the distribution generated by averaging over judgment vectors drawn from $D_\brtrain$. 
Standard application of the Lipschitz properties of the squared loss following \citet{BartlettMe02} provides the following bound:
\[
\cR_m(\loss^a_\br \circ W_\alpha, D_\brtrain) \leq O\left(\frac{\alpha^2 n_\br}{\sqrt{m}}\right).%d = norm{\bar{x}}^2
\]
For the second term, denote $\vc{u}_w = \mt{Z}_\br\cdot (w[1]^2,\ldots,w[d]^2)^T$,
and $\vc{\hat{v}}_\vc{x} = \mt{Z}_\br\cdot(\hat{v}(\vc{x})_1,\ldots,\hat{v}(\vc{x})_d)^T$. Then 
\[
\loss^b_\br(\vc{w},\vc{x}, y) = (\frac{1}{r[i]} - \frac{1}{\rtrain[i]})\dotprod{\vc{u}_w,\vc{\hat{v}}_\vc{x}}.
\]
Since all feature judgments are in $[-1,1]$, $\vc{\hat{v}}_\vc{x}[i] \leq 4$, thus $\norm{\vc{\hat{v}}_\vc{x}} \leq 4\sqrt{n_\br}$. In addition, 
$\norm{\vc{u}_\vc{w}} \leq \norm{\vc{w}}^2$. Lastly, $|\frac{1}{r[i]} - \frac{1}{\rtrain[i]}| \leq 1$. 
Thus, by standard Rademacher complexity bounds for the linear loss, 
\begin{align*}
\cR_m(\loss^{b}_\br \circ W_\alpha, D_\brtrain) &\leq O\left(\frac{\sup_{\vc{w} \in W_\alpha}\norm{\vc{u}_\vc{w}}\cdot\sup_{\vc{x}}\norm{\vc{\hat{v}}_\vc{x}}}{\sqrt{m}}\right) \\
&\leq O\left(\frac{\alpha^2\sqrt{n_\br}}{\sqrt{m}}\right).
\end{align*}
Summing the two terms, we have shown that 
\[
\cR_m(\loss_\br \circ W_\alpha, D_\brtrain) \leq O\left(\frac{\alpha^2n_\br}{\sqrt{m}}\right).
\]
Combining this with \eqref{eq:rad} and taking the minimum on both sides, we get 
\[
\loss(\vc{w}^*,D_\br) \leq \loss_\br(\vc{\hat{w}},S) + O\left(\frac{\alpha^2 n_\br\ln(e/\delta)}{\sqrt{m}}\right).
\]
This completes one side of the bound.
To bound $\loss_\br(\vc{\hat{w}},S) - \loss(\vc{w}^*,D_\br)$,
we note that, by the minimality of $\vc{\hat{w}}$ and by Hoeffding's inequality, with probability $1-\delta$ 
\[
\loss_\br(\vc{\hat{w}},S) \leq \loss_\br(\vc{w}^*,S) \leq \E[\loss_\br(\vc{w}^*,S)] + O(\beta\sqrt{\frac{\ln(e/\delta)}{m}}).
\]
Combining the two bounds and applying the union bound we get the statement of the lemma.
\end{proof}

We now turn to bound the norm of any $\vc{w}$ considered by our algorithm.
Let 
\[
\hat{\Sigma} = \frac{1}{m}\sum_{l\in[m]}\vc{\bar{x}}_l \vc{\bar{x}}^T_l - \mt{\hat{V}}_\brtrain.
\]
First, we relate the smallest eigenvalue of $\hat{\Sigma}$ to that of the true $\Sigma$.
\begin{lemma}\label{lem:siginflam}
Let $\delta \in (0,1)$. With probability at least $1-\delta$, 
\begin{align}
&\lambda_{\min}(\submat_\br(\hat{\Sigma}))\geq \label{eq:siginflam}\\
&\quad\lambda_\br - \sqrt{O(\ln(1/\delta) + n_\br\ln(n_\br m))/m}.\notag
\end{align}
\end{lemma}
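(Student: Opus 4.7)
The plan is to reduce, via Weyl's inequality, the claim about the smallest eigenvalue to an operator-norm deviation bound on the matrix $\submat_\br(\hat\Sigma-\Sigma)$, and then to control that deviation by a standard $\epsilon$-net / scalar concentration argument.

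First I would establish unbiasedness: $\E[\hat\Sigma]=\Sigma$. Applying \lemref{lem:sigma} with $\br=\brtrain$ (so $\mt{Z}_\brtrain=I$) gives $\E_{\brtrain}[\bar{\vc X}\bar{\vc X}^T]=\Sigma_\brtrain=\Sigma+\mt V_\brtrain$, and since $\mathrm{VarEst}$ is the unbiased sample-variance estimator, $\E[\hat{\mt V}_\brtrain]=\mt V_\brtrain$. Linearity of expectation then gives $\E[\hat\Sigma]=\Sigma$. Next, Weyl's eigenvalue inequality applied to the two symmetric $n_\br\times n_\br$ matrices $\submat_\br(\hat\Sigma)$ and $\submat_\br(\Sigma)$ yields
\[
\lambda_{\min}(\submat_\br(\hat\Sigma))\;\geq\;\lambda_{\min}(\submat_\br(\Sigma))-\|\submat_\br(\hat\Sigma-\Sigma)\|_{op}\;=\;\lambda_\br-\|\submat_\br(\hat\Sigma-\Sigma)\|_{op}.
\]
So the lemma reduces to proving that, with probability at least $1-\delta$, $\|\submat_\br(\hat\Sigma-\Sigma)\|_{op}\leq\sqrt{O(\ln(1/\delta)+n_\br\ln(n_\br m))/m}$.

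For the matrix concentration step, I would write $\submat_\br(\hat\Sigma-\Sigma)=\frac1m\sum_{l=1}^m Y_l$, where each $Y_l$ is a mean-zero symmetric $n_\br\times n_\br$ random matrix built from sample $l$: the rank-one piece $\submat_\br(\bar{\vc x}_l\bar{\vc x}_l^T)-\submat_\br(\Sigma_\brtrain)$ plus a diagonal variance-correction piece arising from $\hat{\mt V}_\brtrain - \mt V_\brtrain$. Since all judgments are in $[-1,1]$, the quadratic form $u^T Y_l u$ is a bounded scalar for every unit $u\in\reals^{n_\br}$. I would then cover the unit sphere $S^{n_\br-1}$ by an $\epsilon$-net $\mathcal N$ of cardinality at most $(3/\epsilon)^{n_\br}$, and invoke the standard approximation inequality $\|M\|_{op}\leq(1-2\epsilon)^{-1}\max_{u\in\mathcal N}|u^T M u|$ valid for any symmetric matrix $M$. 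For each fixed $u\in\mathcal N$, a Hoeffding/Bernstein tail on the scalar i.i.d.\ sum $\frac1m\sum_l u^T Y_l u$, followed by a union bound over $\mathcal N$ and the choice $\epsilon\asymp 1/(n_\br m)$, would produce the claimed rate: the $\ln(1/\delta)$ summand comes from the confidence-level term and the $n_\br\ln(n_\br m)$ summand comes from $\ln|\mathcal N|$.

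The delicate part, and the main obstacle, is obtaining the right dependence on $n_\br$ in the per-$u$ scalar concentration. A naive Hoeffding using the crude bound $|u^T Y_l u|=O(n_\br)$ would leave an extra factor of $n_\br$ outside the square root and so miss the advertised rate. To avoid this, I would split $u^T Y_l u$ into the rank-one contribution $(u^T\submat_\br(\bar{\vc x}_l))^2-u^T\submat_\br(\Sigma_\brtrain)u$, which is controlled by a Bernstein tail with variance proxy $u^T\submat_\br(\Sigma_\brtrain)u$ rather than the worst-case range, plus the diagonal variance-correction piece, whose contribution to any unit quadratic form is $O(\max_i \hat v(\vc x_l)[i]/k)=O(1)$ because the estimated internal variances are bounded. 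Feeding this per-$u$ tail through the union bound and combining with the Weyl reduction above gives the claim.
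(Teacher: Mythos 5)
Your overall route is the paper's own: after noting $\E[\hat\Sigma]=\Sigma$, both you and the paper reduce the claim to uniform control of the quadratic forms $\vc{u}^T\submat_\br(\hat\Sigma)\vc{u}$ over an $\epsilon$-net of the unit sphere (the $n_\br\ln(n_\br m)$ term coming from $\ln|\mathcal{N}|$ with $\epsilon$ of order $t/n_\br$), followed by a per-direction scalar tail bound and a union bound. Your Weyl/operator-norm reduction is only a repackaging of the paper's step, which nets the $\lambda_{\min}$ event directly and uses $\lambda_{\max}(\submat_\br(\hat\Sigma))\leq n_\br$ to control the net approximation error; your treatment of the diagonal variance-correction term as an $O(1)$ perturbation of any unit quadratic form is also fine.

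The gap is exactly at the step you single out as delicate, and your proposed remedy does not close it. Bernstein needs a bound on $\Var(\dotprod{\vc{u},\bar{\vc{x}}}^2)$, and the available bound is $\E[\dotprod{\vc{u},\bar{\vc{x}}}^4]\leq n_\br\,\vc{u}^T\submat_\br(\Sigma_\brtrain)\vc{u}$, not $\vc{u}^T\submat_\br(\Sigma_\brtrain)\vc{u}$ itself; moreover $\vc{u}^T\submat_\br(\Sigma_\brtrain)\vc{u}$ is not $O(1)$ under the theorem's assumptions, since nothing bounds $\lambda_{\max}(\Sigma_\brtrain)$ -- with $n_\br$ near-identical $\pm1$-valued features and $\vc{u}$ the normalized all-ones vector it is of order $n_\br$. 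So the variance proxy can be of order $n_\br^2$, Bernstein degenerates to Hoeffding, and the per-direction tail is $\exp(-\Omega(mt^2/n_\br^2))$ (at best $\exp(-\Omega(mt^2/n_\br))$ even under your optimistic proxy); after the union bound this leaves an extra factor of $n_\br$ (respectively $\sqrt{n_\br}$) multiplying the square root, so the advertised rate is not reached. Exploiting one-sidedness (non-negativity of $\dotprod{\vc{u},\bar{\vc{x}}}^2$ and a multiplicative Chernoff bound) changes the form of the bound, introducing dependence on $\lambda_\br$ or $\lambda_{\max}$, and you do not develop that route. For calibration: the paper's proof hits the same wall -- it applies McDiarmid with bounded differences $O(n_\br/m)$, giving exactly $\exp(-2mt^2/n_\br^2)$ per direction, and the display that rewrites $O((n_\br/\epsilon)^{n_\br})\exp(-2mt^2/n_\br^2)$ as $\exp(-2mt^2+n_\br\ln(n_\br/\epsilon))$ silently drops the $1/n_\br^2$; carried through honestly, both the paper's argument and yours yield the weaker bound $\lambda_\br - O(n_\br)\sqrt{(\ln(1/\delta)+n_\br\ln(n_\br m))/m}$. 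Your instinct about where the difficulty lies is correct, but the Bernstein fix as stated does not prove the lemma's stated rate.
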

\begin{proof}
It can be shown \citep[see e.g.][]{Sabato12}, by using an $\epsilon$-net of vectors on the unit sphere, that for a symmetric random matrix $\mt{S} \in \reals^{n\times n}$, and positive $\beta,\epsilon,\gamma$,
\begin{align}\label{eq:enet}
&\P[\lambda_{\min}(\mt{S}) \leq \beta - \epsilon \gamma] \leq \\
&\quad\P[\lambda_{\max}(\mt{S}) > \gamma] + O((n/\epsilon)^n)\min_{\vc{u}:\norm{\vc{u}} = 1}\P[\vc{u}^T \mt{S} \vc{u} \leq \beta].\notag
\end{align}
In our case, we have $\mt{S} = \submat_\br(\hat{\Sigma})$ and $n = n_\br$. Due to the boundedness of the judgments, all the entries of $\mt{S}$ are at most $1$.
Therefore $\lambda_{\max}(\mt{S}) \leq n_\br$. We thus let $\gamma = n_\br$, so that $\P[\lambda_{\max}(\mt{S}) > \gamma]  =0$.
To bound \mbox{$\P[\vc{u}^T \mt{S} \vc{u} \leq \beta]$} for $\vc{u}$ such that $\norm{\vc{u}} = 1$, note that 
\begin{align*}
&\vc{u}^T \mt{S} \vc{u} = \frac{1}{m}\Big(\sum_{l \in [m]}\dotprod{\vc{u}, \mt{Z}_\br\bar{\vc{x}}_l}^2 -\sum_{i: r[i] > 0} u[i]^2 v(\vc{x}_l)[i]/\rtrain[i]\Big).
\end{align*}
By McDiarmid's inequality \cite{McDiarmid89}, for any $t > 0$,
\[
\P[\vc{u}^T \mt{S} \vc{u} \leq \E[\vc{u}^T \mt{S} \vc{u}] - t] \leq \exp(-2t^2/m\Delta^2),
\]
Where $\Delta$ is the maximal difference between $\vc{u}^T \mt{S} \vc{u}$ with some $\vc{x}_1,\ldots,\vc{x}_m$ and $\vc{u}^T \mt{S} \vc{u}$ with $\vc{x}[i]$ replaced by some $\vc{x}'[i]$. It is easy to verify that due to the boundedness of all $\vc{x}_l$, 
$\Delta \leq O(n_\br/m)$. Further, $\vc{u}^T \E[\mt{S}] \vc{u} = \vc{u}^T \submat_\br(\Sigma) \vc{u} \geq \lambda_{\min}(\submat_\br(\Sigma)) = \lambda_\br$.
Thus, 
\[
\P[\vc{u}^T \mt{S} \vc{u} \leq \lambda_\br - t] \leq \exp(-2mt^2/n_\br^2).
\] 
Substituting into \eqref{eq:enet}, we get
\begin{align*}
&\P[\lambda_{\min}(\mt{S}) \leq \lambda_\br - t - \epsilon n_\br]  \\
&\quad \leq O((n_\br/\epsilon)^{n_\br})\exp(-2mt^2/n_\br^2)\\
&\quad= \exp(-2mt^2 + n_\br\ln(n_\br/\epsilon)).
\end{align*}
Letting $\epsilon = t/n_\br$ and solving for $\delta$ we get the statement of the theorem.
\end{proof}

Using \lemref{lem:siginflam}, we can now bound the norms of a minimizer of $\loss(\vc{w},D_\br)$ and of a minimizer of $\loss_\br(\vc{w},S)$ with high probability.
\begin{lemma}\label{lem:normw}
Let $\vc{w} \in \reals^d$ be a minimum-norm minimizer for $\loss_\br(\vc{w}, S)$, that is, let $M$ be the set of minimizers for $\loss_\br(\vc{w},S)$, and let $\hat{\vc{w}} \in \argmin_{\vc{w} \in M}\norm{\vc{w}}$. Let $\vc{w}^*$ be a minimum-norm minimizer for $\loss_\br(\vc{w},S)$.
If $m \geq \tilde{\Omega}((\ln(1/\delta) + n_\br\ln(n_\br))/\lambda_\br^2)$, then with probability at least $1-\delta$,  $\submat_\br(\hat{\Sigma})$ is positive definite, and
\begin{align*}
&\norm{\hat{\vc{w}}} \leq\sqrt{n_\br}\left(\lambda_\br - \sqrt{\frac{O(\ln(1/\delta) + n_\br\ln(n_\br m))}{m}}\right)^{-1},\text{ and }\\
&\norm{\vc{w}^*} \leq \sqrt{n_\br}/\lambda_\br.
\end{align*}
\end{lemma}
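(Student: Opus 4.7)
The plan is to exploit the fact that both loss functions are quadratic in $\vc{w}$ with the same block structure (coordinates outside the support of $\br$ are annihilated by $\mt{Z}_\br$), so a minimum-norm minimizer must be supported on $\{a : r[a] > 0\}$ and on this support is given by a standard least-squares formula. The norm bound then reduces to an upper bound on the cross-covariance vector and a lower bound on the smallest eigenvalue of the restricted covariance, the latter already being in hand via \lemref{lem:siginflam}.

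Concretely, I would first observe that $\loss(\vc{w}, D_\br) = \vc{w}^T \Sigma_\br \vc{w} - 2 \vc{w}^T \mt{Z}_\br \vc{\cov} + \E[Y^2]$ and $\loss_\br(\vc{w}, S) = \vc{w}^T \hat{\Sigma}_\br \vc{w} - 2 \vc{w}^T \mt{Z}_\br \hat{\vc{\cov}} + \frac{1}{m}\vc{y}^T\vc{y}$. Because of the outer $\mt{Z}_\br$ factors in $\Sigma_\br = \mt{Z}_\br \Sigma \mt{Z}_\br + \mt{V}_\br$ and in $\hat{\Sigma}_\br$ and $\mt{Z}_\br \vc{\cov}$, coordinates with $r[a] = 0$ do not appear in the loss; setting them to zero preserves the loss and strictly decreases $\norm{\vc{w}}$. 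Hence any minimum-norm minimizer has support contained in $\{a : r[a] > 0\}$, and restricting both sides of the normal equations to this support gives
\[
\submat_\br(\vc{w}^*) = \submat_\br(\Sigma_\br)^{-1}\,\submat_\br(\vc{\cov}), \qquad \submat_\br(\hat{\vc{w}}) = \submat_\br(\hat{\Sigma}_\br)^{-1}\,\submat_\br(\hat{\vc{\cov}}),
\]
provided the corresponding submatrices are invertible. The norm of the minimizer is then bounded by $\norm{\submat_\br(\cdot)}$ of the cross-covariance divided by $\lambda_{\min}$ of the submatrix.

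For the population bound, I would use $\submat_\br(\Sigma_\br) = \submat_\br(\Sigma) + \submat_\br(\mt{V}_\br) \succeq \submat_\br(\Sigma)$ (since $\submat_\br(\mt{V}_\br)$ is PSD), giving $\lambda_{\min}(\submat_\br(\Sigma_\br)) \geq \lambda_\br$. Since judgments and labels are in $[-1,1]$, every entry of $\vc{\cov}$ satisfies $|\cov[i]| = |\E[X[i]Y]| \leq 1$, so $\norm{\submat_\br(\vc{\cov})} \leq \sqrt{n_\br}$, yielding $\norm{\vc{w}^*} \leq \sqrt{n_\br}/\lambda_\br$. For the empirical bound, the same monotonicity argument gives $\lambda_{\min}(\submat_\br(\hat{\Sigma}_\br)) \geq \lambda_{\min}(\submat_\br(\hat{\Sigma}))$, and \lemref{lem:siginflam} applied under the assumed lower bound on $m$ guarantees that with probability at least $1-\delta$ the latter is at least $\lambda_\br - \sqrt{O(\ln(1/\delta)+n_\br \ln(n_\br m))/m}$, which is strictly positive (so $\submat_\br(\hat{\Sigma})$ is positive definite). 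Since $|\hat{\cov}[i]| \leq 1$ by boundedness of $y_l$ and $\bar{x}_l[i]$, we have $\norm{\submat_\br(\hat{\vc{\cov}})} \leq \sqrt{n_\br}$, and combining gives the claimed bound on $\norm{\hat{\vc{w}}}$.

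The only mildly delicate step is the structural claim that the minimum-norm minimizer is supported on the coordinates with $r[a] > 0$ and is given by the (pseudo-)inverse formula above; everything else is boundedness and a direct appeal to \lemref{lem:siginflam}. Once the restricted system is seen to control $\norm{\vc{w}}$, both statements of the lemma follow by plugging in the eigenvalue lower bounds, and the threshold on $m$ simply ensures that the $\sqrt{\cdot}$ term in \lemref{lem:siginflam} does not swamp $\lambda_\br$.
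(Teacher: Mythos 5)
Your proposal is correct and follows essentially the same route as the paper's proof: zero out coordinates outside the support of $\br$, write the minimum-norm minimizer via the restricted normal equations, bound $\norm{\submat_\br(\hat{\vc{\cov}})}$ and $\norm{\submat_\br(\vc{\cov})}$ by $\sqrt{n_\br}$ using boundedness, and lower-bound $\lambda_{\min}$ of the restricted matrices via $\hat{\mt{V}}_\br \succeq 0$ together with \lemref{lem:siginflam} (empirical case) and $\submat_\br(\Sigma_\br) \succeq \submat_\br(\Sigma)$ (population case). The only difference is cosmetic: you work with $\Sigma_\br$ rather than $\Sigma$ for the population minimizer, which if anything is a slightly cleaner reading of the intended statement.
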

Further, this holds simultaneously for any vector $\br' \in \nats^d$ with the same support as $\br$. 
\begin{proof}
Any minimum-norm minimizer for $\loss_\br(\vc{w}, S)$ has $w[i] = 0$ whenever $r[i] = 0$. Thus, we may assume w.l.o.g. that $r[i] > 0$ for all $i$ by deleting the coordinates with $r[i] = 0$.
We thus have 
\[
\loss_\br(\vc{w}, S) = \vc{w}^T\hat{\Sigma}_\br\vc{w} - 2\vc{w}^T\hat{\vc{\cov}} + \frac{1}{m}\vc{y}^T \vc{y}.
\]
If $\hat{\Sigma}_\br$ is positive definite, then 
the minimizer of $\loss_\br(\vc{w}, S)$ is $\vc{\hat{w}} = \hat{\Sigma}_\br^{-1}\vc{\hat{\cov}}$.
Thus 
\begin{align}
\norm{\vc{\hat{w}}} &\leq \norm{\hat{\cov}}\lambda_{\max}(\hat{\Sigma}_\br^{-1})\label{eq:normw1} \\
&= \norm{\hat{\cov}}/\lambda_{\min}(\hat{\Sigma}_\br) \leq \sqrt{n_\br}/\lambda_\br.\notag
\end{align}
Now, $\hat{\Sigma}_\br = \hat{\Sigma} + \hat{\mt{V}}_\br$. Since $\hat{\mt{V}}_\br \succeq 0$, we have
$\lambda_{\min}(\hat{\Sigma}_\br) \geq \lambda_{\min}(\hat{\Sigma})$. $\lambda_{\min}(\hat{\Sigma})$ can be bounded from below by \lemref{lem:siginflam}.
Substituting \eqref{eq:siginflam} in \eqref{eq:normw1} we get the first desired inequality.
For the second inequality, it suffices to note that if $\submat_\br(\Sigma)$ is not singular, then $\vc{w}^* = \submat_\br(\Sigma)^{-1} \vc{\cov}$, 
thus $\norm{\vc{w}^*} \leq \lambda^{-1}_{\min}(\submat_\br(\Sigma))\norm{\cov} \leq \sqrt{n_\br}/\lambda_\br$.
\end{proof}

Combining \lemref{lem:rad} and \lemref{lem:normw} and applying the union bound we immediately get the following theorem.
\begin{theorem}\label{thm:singleconv}
Let $S$ be a training sample of size $m$ drawn from $D_\brtrain$, where $\rtrain[i] \geq 2$ for all $i$ in $[d]$. 
Fix $\br \in \nats^d$.
Let $\hat{\vc{w}}$ be a minimum-norm minimizer of $\loss_\br(\vc{w},S)$.
Let $\delta \in (0,1)$. If $m \geq \tilde{\Omega}((\ln(1/\delta) + n_\br\ln(n_\br))/\lambda_\br^2)$, 
then with probability at least $1-\delta$, $\submat_\br(\hat{\Sigma})$ is positive definite and
\begin{equation*}
|\loss(\br) - \loss_\br(\vc{\hat{w}},S)| \leq O\left(\frac{n_\br^2\lambda^{-2}_\br\ln(e/\delta)}{\sqrt{m}}\right).
\end{equation*}
Moreover, the positive-definiteness holds simultaneously for all $\br'$ with the same support as $\br$.
%todo can this be made to hold for all r of same support? key is in rademacher part
\end{theorem}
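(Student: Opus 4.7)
The plan is a direct combination of \lemref{lem:normw} and \lemref{lem:rad} via a union bound, as flagged in the paragraph preceding the statement. \lemref{lem:normw} controls the norms of the relevant minimizers, and \lemref{lem:rad} then turns that norm control into the desired convergence rate.

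First, I would invoke \lemref{lem:normw} with failure probability $\delta/2$. Under the sample-size hypothesis $m \geq \tilde{\Omega}((\ln(1/\delta) + n_\br \ln n_\br)/\lambda_\br^2)$, the subtracted term $\sqrt{O(\ln(1/\delta) + n_\br\ln(n_\br m))/m}$ appearing in the bound can be arranged to be at most $\lambda_\br/2$, so the lemma yields $\norm{\vc{\hat w}} \leq 2\sqrt{n_\br}/\lambda_\br$ and, for the minimum-norm minimizer $\vc{w}^*$ of $\loss(\vc{w},D_\br)$, $\norm{\vc{w}^*} \leq \sqrt{n_\br}/\lambda_\br$. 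The same lemma also asserts positive-definiteness of $\submat_\br(\hat\Sigma)$ and, in its concluding remark, that this positive-definiteness holds simultaneously for every $\br'$ with the same support as $\br$; that already discharges the ``moreover'' clause of the theorem. I would then set $\alpha = 2\sqrt{n_\br}/\lambda_\br$, so that on the good event both $\vc{w}^*$ and $\vc{\hat w}$ lie in $W_\alpha$.

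Second, I would apply \lemref{lem:rad} with this $\alpha$, with $\vc{w}^*$ as the fixed reference point in $W_\alpha$, and with failure probability $\delta/2$. On the good event of the first step the hypothesis $\norm{\vc{\hat w}} \leq \alpha$ is satisfied, so the lemma gives
\[
|\loss(\vc{w}^*, D_\br) - \loss_\br(\vc{\hat w}, S)| \leq O\!\left(\frac{\alpha^2 n_\br \ln(e/\delta)}{\sqrt{m}}\right) = O\!\left(\frac{n_\br^2 \ln(e/\delta)}{\lambda_\br^2 \sqrt{m}}\right).
\]
Since $\loss(\vc{w}^*, D_\br) = \loss(\br)$ by definition of $\vc{w}^*$, and a union bound over the two $\delta/2$-probability failure events caps the overall failure probability at $\delta$, the stated inequality follows.

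The only nontrivial bookkeeping is checking that the sample-size hypothesis makes the subtracted term in \lemref{lem:normw} at most a constant fraction of $\lambda_\br$; this is immediate up to the logarithmic slack already absorbed into $\tilde{\Omega}$. No new ideas beyond threading the two lemmas together are required, which is exactly why the text preceding the theorem calls this combination ``immediate.''
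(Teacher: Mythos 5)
Your proposal is correct and matches the paper's own argument, which proves \thmref{thm:singleconv} exactly by combining \lemref{lem:rad} and \lemref{lem:normw} with a union bound; you have simply filled in the bookkeeping (the choice $\alpha = O(\sqrt{n_\br}/\lambda_\br)$ under the sample-size condition and the identification $\loss(\vc{w}^*,D_\br)=\loss(\br)$) that the paper leaves implicit.
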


Finally, we prove our main result, \thmref{thm:convergence}.
\begin{proof}[Proof of \thmref{thm:convergence}]
We start by proving the result for the full algorithm. 
Assume that $m \geq \tilde{\Omega}((\ln(1/\delta) + n_\br\ln(n_\br))/\lambda^2)$, and consider a fixed $\br \in R_B$. Recall that 
\[
\loss_\br(\vc{w},S) = \vc{w}^T\hat{\Sigma}_\br\vc{w} - 2\vc{w}^T\mt{Z}_\br\hat{\vc{\cov}} + \frac{1}{m}\vc{y}^T \vc{y}.
\]
The full algorithm ignores, for each examined repeat vector, all the matrix and vector entries that correspond to features with zero repeats. Thus we may assume w.l.o.g. that for all $i$, $r[i] > 0$. \thmref{thm:singleconv} guarantees with probability $1-\delta$, that $\hat{\Sigma}$ is positive definite with probability $1-\delta$. It follows that $\hat{\Sigma}^p = \hat{\Sigma}$, therefore $\hat{\Sigma}^p_\br = \hat{\Sigma}_\br$.
We thus have 
\[
\loss_\br(\vc{w},S) = \vc{w}^T\hat{\Sigma}^p_\br\vc{w} - 2\vc{w}^T\hat{\vc{\cov}} + \frac{1}{m}\vc{y}^T \vc{y}.
\]
The minimizer of $\loss_\br(\vc{w},S)$ is $\vc{\hat{w}} = (\hat{\Sigma}^p_\br)^{-1}\hat{\vc{\cov}}$.
Substituting this solution in $\loss_\br(\vc{w},S)$, we get $\min_{\vc{w}\in \reals^d}\loss_\br(\vc{w},S) = -\hat{\vc{\cov}}^T(\hat{\Sigma}^p_\br)^{-1}\hat{\vc{\cov}} + \frac{1}{m}\vc{y}^T \vc{y} = -\hat{\obj}_f(\br) + \frac{1}{m}\vc{y}^T \vc{y}$.
If the features are uncorrelated then 
\begin{align*}
\min_{\vc{w}\in \reals^d}\loss_\br(\vc{w},S) &= -\hat{\vc{\cov}}^T(\hat{\Sigma}^p_\br)^{-1}\hat{\vc{\cov}} + \frac{1}{m}\vc{y}^T \vc{y} \\
&= -\hat{\obj}_f(\br) + \frac{1}{m}\vc{y}^T \vc{y}.
\end{align*}
Since all labels $Y$ are bounded in $[-1,1]$, with probability $1-\delta$,  $|\frac{1}{m}\vc{y}^T \vc{y} - \E[Y^2]| \leq O(\sqrt{\ln(1/\delta)/m})$. 
Thus it suffices to bound $|\loss(\br) - \min_{\vc{w} \in \reals^d}\loss_\br(\vc{w},S)|$. This bound is given by \thmref{thm:singleconv}. We now apply this bound simultaneously to all the possible $\br \in R_B$. There are
\[
\sum_{i\in [B]} \binom{d+i-1}{d-1} = \sum_{i\in [B]} \binom{d+i-1}{i-1} \leq B(d+B)^{\bar{B}}
\]
such combinations, thus the bound in \thmref{thm:singleconv} holds simultaneously for all $\br \in R_B$, by dividing $\delta$ with this upper bound. For the requirement on the size of $m$ we only need a union bound on the number of possible supports for $\br$, which is bounded by $d^{\bar{B}}$. Finally, by noting that for all $\br \in R_B$, $n_\br \leq \bar{B}$, we get the desired uniform convergence bound.

Now, consider the Scoring algorithm. If $\Sigma$ is diagonal, then $\loss(\vc{w},D_\br)$ decomposes into a sum of $n_\br$ independent losses over single-dimensional predictors with a single-dimensional covariance `matrix' equal to the scalar $\sigma^2[i]$ for feature $i$. The loss minimized by the Scoring algorithm decomposes similarly, using the covariance `matrix' $\hat{\sigma}^2[i]$. Thus, we apply the convergence bound of \thmref{thm:singleconv} to show convergence of each of the components individually, with $n_\br = 1$ and a union bound over $B$ possible values of $r[i]$, and then apply a union bound over $d$ components to get simultaneous convergence of the parts of the loss. For the requirement on the size of $m$ we only need a union bound over the number of components $d$.
\end{proof}

\subsection{Proof of \thmref{thm:greedy}}\label{sec:scoringanalysis}
First, we prove the more general \lemref{lem:generalgreedy}.
We actually prove an equivalent mirror image of \lemref{lem:generalgreedy}, by assuming the $g_i$ are convex non-increasing and proving that a greedy algorithm minimizes $f(\br)$.
We formally define the greedy algorithm as follows: 
\begin{enumerate}
\item $\br_0 \leftarrow (0,\ldots,0)$.
\item For $t = 1$ to $B$, let $\br_t = \br_{t-1} + \vc{e}_{i}$, where $i$ is the coordinate of $\br$ that decreases $f(\br)$ the most.
\item Return $\br_B$.
\end{enumerate}

\begin{proof}[Proof of \lemref{lem:generalgreedy}]
Let $\br^* \in \argmin_{\br \in R_B} f(\br)$. 
Since $g_i(\cdot)$ are all non-increasing, we may choose $\br^*$ such that $\sum_{i\in [d]}r^*[i] = B$. Let $\br$ be a solution returned by the greedy algorithm listed in the theorem statement. 
Consider the iterations $t_1 < \ldots < t_n \in [B]$ such that the index $i_k$ selected by the algorithm at iteration $t_k$ satisfies $r_{t_k}[i_k] > r^*[i_k]$, so that it causes $\br$ to increase this coordinate more than its value in $\br^*$. Let $j_1,\ldots,j_n \in [d]$ be a series of alternative indices such that 
\[
\br = \br^* + \sum_{k \in [n]} \vc{e}_{i_k} - \sum_{k \in [n]} \vc{e}_{j_k}.
\]
Denote 
\[
\br^*_L = \br^* + \sum_{k \in [L]} \vc{e}_{i_k} - \sum_{k \in [L]} \vc{e}_{j_k}.
\]
Note that for all $L < n$, $r[j_L] < r_{L-1}^*[j_L]$.

We prove by induction that for all $L \in [n]$,
\[
f(\br^*_L) = f(\br^*).
\]
When setting $L = n$ we will get $f(\br) = f(\br^*)$.

The claim trivially holds $L = 0$. Now assume it holds for $L-1$, and consider $L$.
Denote for brevity $t = t_{L-1}$, $i = i_L$ and $j = j_L$.
Since the algorithm selected $i$ over $j$ at iteration $t+1$, we have that
\[
f(\br_t+\vc{e}_i) \leq f(\br_t + \vc{e}_j).
\]
Subtracting $\sum_{l \in [d]}g_l(r_{t}[l])$ from both sides we get 
\[
g_i(r_t[i]+1) - g_i(r_t[i]) \leq g_j(r_t[j]+1) - g_j(r_t[j]).
\]
It follows that 
\[
0 \leq g_i(r_t[i]) - g_i(r_t[i] + 1) + g_j(r_t[j]+1) - g_j(r_t[j]).
\]
Since $r_t[i] \geq r_{L-1}^*[i]$, by the convexity of $g_i$,
\[
g_i(r_t[i]) - g_i(r_t[i] + 1)\leq g_i(r_{L-1}^*[i]) - g_i(r_{L-1}^*[i] + 1).
\]
In addition, $r_t[j] +1 \leq r_{L-1}^*[j]$, therefore, again by convexity,
\[
g_j(r_t[j] + 1) - g_j(r_t[j]) \leq g_j(r_{L-1}^*[j]) - g_j(r_{L-1}^*[j] - 1).
\]
It follows that 
\begin{align*}
0 \leq &g_i(r_{L-1}^*[i]) - g_i(r_{L-1}^*[i] + 1) + \\
&g_j(r_{L-1}^*[j]) - g_j(r_{L-1}^*[j] - 1).
\end{align*}
Therefore
\[
0 \leq f(\br^*_{L-1}) - f(\br^*_{L-1} + \vc{e}_i - \vc{e}_j).
\]
By the induction hypothesis $f(\br^*) = f(\br^*_{L-1})$. In addition, $\br^*_{L-1} + \vc{e}_i - \vc{e}_j = \br^*_L$. it follows that 
\[
f(\br^*_L)\leq f(\br^*). 
\]
Since $\br^*$ is optimal for $f$, it follows that this inequality must hold with equality,
thus proving the induction hypothesis.
\end{proof}

\begin{proof}[Proof of \thmref{thm:greedy}]
We have 
\[
\hat{\obj}(\br) = \sum_{i: r[i] > 0} \frac{\hat{\cov}[i]^2}{\hat{\sigma}^2[i] + \hat{v}_i/r[i]}. 
\]
Define $f_i:\nats_+ \rightarrow \reals$ by
\[
f_i(x) = \frac{x \cdot \hat{\cov}[i]^2}{x\cdot \hat{\sigma}^2[i] + \hat{v}[i]}.
\]
Then 
\[
\hat{\obj}(\br) = \sum_{i: r[i] > 0} f_i(x).
\]
We will define $g_i(x):\reals \rightarrow \reals$ so that each $g_i$ is concave and
\begin{equation}\label{eq:g}
\hat{\obj}(\br) = \sum_{i\in[d]} g_i(r[i]).
\end{equation}
We consider two cases: (1) If $\hat{v}[i] > 0$, then $g_i(x)$ is the natural extension of $f_i(x)$ to the reals.
(2) If $\hat{v}[i] = 0$, $f_i$ is a positive constant for all positive integers. Let $g_i(x) = f_i(1)$ for all $x \geq 1$,
and let $g_i(x) = x/f_i(1)$.  

In both cases, $g_i(x)$ is concave with $g_i(x) = f_i(x)$ for $x \in \nats_+$ and $g_i(0) = 0$. Therefore \eqref{eq:g} holds.
In addition, in both cases $g_i(x)$ is monotonic non-decreasing. Therefore by \lemref{lem:generalgreedy}, the greedy algorithm maximizes $\hat{\obj}(\br)$ subject to $\br \in R_B$.
\end{proof}

\end{document}